\newif\ifdraft
\newcommandx{\nt}[2][1=]{\todo[linecolor=red,
			backgroundcolor=red!10,bordercolor=red,#1]{#2}}
\newcommandx{\jy}[2][1=]{\todo[linecolor=green,
			backgroundcolor=green!10,bordercolor=green,#1]{JY:#2}}
\newcommandx{\kg}[2][1=]{\todo[linecolor=green,
			backgroundcolor=green!10,bordercolor=blue,#1]{KG:#2}}
\newcommandx{\sw}[2][1=]{\todo[linecolor=blue,
			backgroundcolor=orange!10,bordercolor=orange,#1]{SW:#2}}
\newcommand{\nt}[1]{{}}
\newcommand{\kg}[1]{{}}
\newcommand{\sw}[1]{{}}
\newcommand{\jy}[1]{{}}
\newif\iftwocolumn
\newtheorem{problem}{Problem}
\newtheorem{proposition}{Proposition}[section]
\newtheorem{lemma}{Lemma}[section]
\newtheorem{theorem}{Theorem}[section]
\theoremstyle{definition}
\theoremstyle{remark}
\def\subsubsection{\@startsection{subsubsection}
                                 {3}
                                 {\z@ \hspace*{1mm}}
                                 {0ex plus 0.1ex minus 0.1ex}
                                 {0ex}
                                 {\normalfont\normalsize\itshape}}
\newcommand{\D}{\mathcal{D}}
\def\ldg{G_{\mathcal A_1,\mathcal A_2}^{\,\ell}\xspace}
\def\ldgg{G^{\,\ell}\xspace}
\def\udg{G_{\mathcal A_1,\mathcal A_2}^{\,u}\xspace}
\def\toro{\texttt{TORO}\xspace}
\def\lrbm{\texttt{LRBM}\xspace} 
\def\urbm{\texttt{URBM}\xspace} 
\def\mrb{\textsc{MRB}\xspace}
\def\rb{\textsc{RB}\xspace}
\def\fvs{\textsc{MFVS}\xspace}
\def\DFSDP{\textsc{DFDP}\xspace}
\def\PQS{\textsc{PQS}\xspace}
\def\spp{\textsc{SepPlan}\xspace}
\def\vsp{\texttt{VSP}\xspace}
\def\minvs{\textsc{MinVS}\xspace}
\def\vertsep{\textsc{VS}\xspace}
\def\ilpmrb{\textsc{TB}_{\mrb}\xspace}
\def\ilptb{\textsc{TB}_{\mrb}\xspace}
\def\ilpfvs{\textsc{TB}_{\textsc{FVS}}\xspace}
\title{
On Minimizing the Number of Running Buffers for Tabletop Rearrangement
}
\author{Kai Gao \qquad Si Wei Feng \qquad Jingjin Yu
\thanks{
The authors are with the Department of Computer Science, 
Rutgers, the State University of New Jersey, Piscataway, NJ, 
USA. E-Mails: \{{\tt kai.gao, siwei.feng, jingjin.yu}\}@rutgers.edu. 
This work is supported in part by NSF awards IIS-1734419, 
IIS-1845888, and CCF-1934924.
}%
}
\begin{document}

\maketitle
\thispagestyle{empty}
\pagestyle{empty}

\ifdraft
\begin{picture}(0,0)%
\put(-12,105){
\framebox(505,40){\parbox{\dimexpr2\linewidth+\fboxsep-\fboxrule}{
\textcolor{blue}{
The file is formatted to look identical to the final compiled IEEE 
conference PDF, with additional margins added for making margin 
notes. Use $\backslash$todo$\{$...$\}$ for general side comments
and $\backslash$jy$\{$...$\}$ for JJ's comments. Set 
$\backslash$drafttrue to $\backslash$draftfalse to remove the 
formatting. 
}}}}
\end{picture}
\vspace*{-5mm}
\fi

\begin{abstract}
For tabletop rearrangement problems with overhand grasps, storage space 
outside the tabletop workspace, or buffers, can temporarily hold objects 
which greatly facilitates the resolution of a given rearrangement task. 
This brings forth the natural question of how many running buffers are 
required so that certain classes of tabletop rearrangement problems are 
feasible.
In this work, we examine the problem for both the labeled (where each
object has a specific goal pose) and the unlabeled (where goal poses
of objects are interchangeable) settings. 
On the structural side, we observe that finding the minimum number of 
running buffers (\mrb) can be carried out on a dependency graph abstracted 
from a problem instance, and show that computing \mrb on dependency graphs 
is NP-hard. 
We then prove that under both labeled and unlabeled settings, even for 
uniform cylindrical objects, the number of required running buffers may 
grow unbounded as the number of objects to be rearranged increases; 
we further show that the bound for the unlabeled case is tight. 
On the algorithmic side, we develop highly effective algorithms for finding 
\mrb for both labeled and unlabeled tabletop rearrangement problems, 
scalable to over a hundred objects under very high object density. 
Employing these algorithms, empirical evaluations show that random 
labeled and unlabeled instances, which more closely mimics real-world 
setups, have much smaller \mrb{s}. 
\end{abstract}

\section{Introduction}\label{sec:intro}
In nearly all aspects of our everyday lives, be it work related, at home, or for 
play, objects are to be grasped and rearranged, e.g., tidying up a messy desk, 
cleaning the table after dinner, or solving a jigsaw puzzle. Similarly, many 
industrial and logistics applications require repetitive rearrangements of many 
objects, e.g., the sorting and packaging of products on conveyors with robots, 
and doing so efficiently is of critical importance to boost the competitiveness 
of the stakeholders. However, even without the challenge of grasping, deciding 
the sequence of objects for optimizing a 
rearrangement task is non-trivial. To that end, Han et al. \cite{han2018complexity} 
examined the problem of \emph{tabletop object rearrangement with overhand grasps} 
(\toro), where objects may be picked up, moved around, and then placed at poses that 
are not in collision with other objects. An object that is picked up but cannot be 
directly placed at its goal is temporarily stored at a \emph{buffer} location.
For example, for the setup given in Fig.~\ref{fig:toro}, using a single manipulator,
either the Coke can or the Pepsi can must be moved to a buffer before the task can 
be completed. They show that computing a pick-n-place sequence that minimizes 
the use of the total number of buffers is NP-hard and provide fast methods for 
computing that solution for problems with a couple dozen objects.
\begin{figure}[ht]
    \centering
\vspace{2mm}
\begin{overpic}
[width=1\columnwidth]{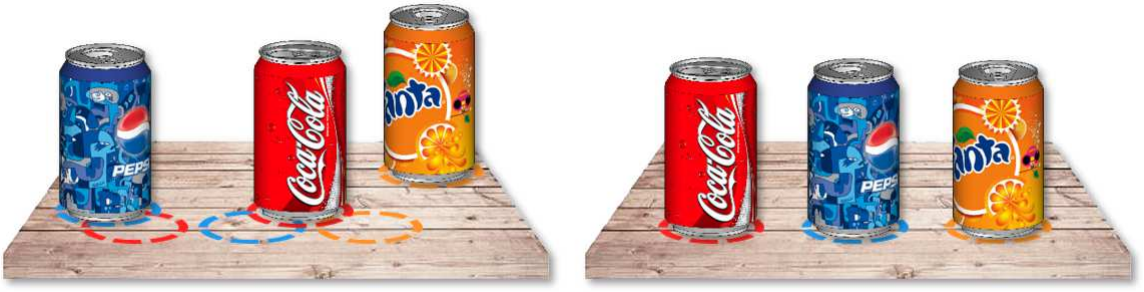}
\end{overpic}
\caption{A \toro instance where the three soda cans are to be rearranged 
from the left configuration to the right configuration.}
\label{fig:toro}
\vspace{-1.5mm}
\end{figure}

In this study, we examine a more practical rearrangement objective which minimizes 
the number of \emph{running buffers} (\rb) needed for solving a \toro instance. 
We seek plans that minimize the maximum number of objects stored at buffers at any 
given moment. We denote this quantity as \mrb (minimum running buffer). The objective
is important because if the \mrb required for solving a \toro instance exceeds the available 
buffer storage, which is always limited in practice, then the instance is infeasible.
Therefore, the structural results and the algorithms that we present may be used not 
only for computing feasible and high-quality rearrangement plans, but they are also 
invaluable as a verification tool, e.g., to verify that a certain rearrangement setup 
will be able to solve most tasks for which it is designed to tackle.

The main technical contribution of this work is three-fold. First, we show that 
computing \mrb on arbitrary \emph{dependency graphs}, which encode the combinatorial 
information of \toro instances, is NP-hard. Second, we establish that for
an $n$-object \toro instance, \mrb can be lower bounded by $\Omega(\sqrt{n})$ for uniform
cylinders, even when all objects are \emph{unlabeled}. This implies that the 
same is true for the \emph{labeled} setting. Then, we provide a matching algorithmic upper
bound $O(\sqrt{n})$ for the unlabeled setting. 
Last but not least, we develop multiple highly effective and optimal algorithms for 
computing rearrangement plans with \mrb for \toro. 
In particular, we present a dynamic programming method for the labeled setting, 
a priority queue-based algorithm for the unlabeled setting, 
and a much more efficient \emph{depth-first-search dynamic programming} routine that readily scales to instances with over a hundred objects for both settings. 
Furthermore, we provide methods for computing plans with the minimum 
number of \emph{total buffers} subject to the \mrb constraints. 

\textbf{Related work}. As a high utility capability, manipulation of objects in 
a bounded workspace has been extensively studied, with works devoted to 
perception/scene understanding \cite{saxena2008robotic,gualtieri2016high,
mitash2017self,xiang2017posecnn}, task/rearrangement planning 
\cite{ben1998practical,stilman2005navigation,
treleaven2013asymptotically,havur2014geometric,haustein2015kinodynamic,
krontiris2015dealing,king2016rearrangement,
han2018complexity,huang2019large,lee2019efficient,pan2020decision}, 
manipulation \cite{taylor1987sensor,goldberg1993orienting,
lynch1999dynamic,dogar2011framework,bohg2013data,dafle2014extrinsic,
boularias2015learning,chavan2015prehensile}, as well as integrated 
holistic approaches \cite{kaelbling2011hierarchical,levine2016end,mahler2017dex,
zeng2018robotic,wells2019learning}.
As object rearrangement problems often embed within them multi-robot motion planning 
problems, rearrangement inherits the PSPACE-hard complexity \cite{hopcroft1984complexity}. 
These problems remain  NP-hard even without complex geometric constraints 
\cite{wilfong1991motion}. Considering rearrangement plan quality, e.g, minimizing the 
number of pick-n-places or the end-effector travel, is also computationally 
intractable \cite{han2018complexity}. 

For rearrangement tasks using mainly prehensile actions, the algorithmic 
studies of Navigation Among Movable Obstacles \cite{stilman2005navigation,
stilman2007manipulation} result in backtracking search methods that can 
effectively deal with monotone and other instances with ``nice'' 
properties.
Via carefully calling monotone solvers, difficult non-monotone cases can 
be solved as well \cite{krontiris2015dealing}.
Han et al. \cite{han2018complexity} relates tabletop rearrangement problems 
to the Traveling Salesperson Problem \cite{papadimitriou1977euclidean}  
and the Feedback Vertex Set problem \cite{karp1972reducibility},
both of which are NP-hard. Nevertheless, integer linear programming 
models are shown to quickly compute high quality  solutions for 
practical sized (e.g., 1-2 dozen of objects) problems. 
Focusing mainly on the unlabeled setting, bounds on the number of 
pick-n-places are provided under different assumptions on disk 
objects in \cite{bereg2006lifting}.
In \cite{lee2019efficient}, a complete algorithm is developed that reasons 
about object retrieval, rearranging other objects as needed, with later 
work \cite{nam2019planning} considering plan optimality and sensor 
occlusion. 
While the objectives in most problems focus on the number 
of motions, Halperin et al. \cite{halperin2020space} examined
space-aware reconfiguration in which disc objects move along straight 
lines in the workspace, seeking to minimize space needed to 
carry out a rearrangement task.

Non-prehensile rearrangement has also been extensively studied, with 
singulation as an early focus \cite{chang2012interactive,
laskey2016robot,eitel2020learning}. Iterative search was employed in 
\cite{huang2019large} for accomplishing a multitude of rearrangement 
tasks spanning singulating, separation, and sorting of identically 
shaped cubes.
Song et al. \cite{song2019multi} combines Monte Carlo Tree Search
with a deep policy network for separating many objects into coherent clusters
within a bounded workspace, supporting non-convex objects. 
More recently, a bi-level planner is proposed \cite{pan2020decision}, 
engaging both (non-prehensile) pushing and (prehensile) overhand grasping 
for sorting a large number of objects. 

On the structural side, a central object that we study is the \emph{dependency graph}
\cite{van2009centralized} structure. We observe that, in the labeled setting, through 
the dependency graph, the running buffer problems naturally connect to \emph{graph 
layout} problems \cite{diaz2002survey, garey1979, papadimitriou1976np, garey1974some, 
gavril1977, bodlaender1995approximating},
where an optimal linear ordering of graph vertices is sought. Graph layout 
problems find a vast number of important applications including VLSI design, scheduling 
\cite{shin2011minimizing}, and so on. For the unlabeled 
setting, the dependency graph becomes a planar one for uniform objects with a square 
or round base. Rearrangement can be tackled through partitioning of the dependency 
graph using a \emph{vertex separator} 
\cite{lipton1979separator,gilbert1984separator,alon1990separator,elsner1997graph}. For a survey on 
these topics, see~\cite{diaz2002survey}. 

\textbf{Paper organization}. The rest of the paper is organized as follows. We 
introduce the \mrb focused rearrangement problems and discuss the associated 
dependency graphs in Sec.~\ref{sec:problem}. Then, in Sec.~\ref{sec:structure}, we 
establish some basic structural properties of the optimality structure of the 
problems, and show that minimizing running buffer size on dependency graphs is 
computationally intractable. We proceed to establish the lower and upper bounds 
on \mrb in Sec.~\ref{sec:bounds} and describe our proposed algorithmic solutions 
in Sec.~\ref{sec:algorithms}. Evaluation follows in Sec.~\ref{sec:evaluation}. 
We conclude with Sec.~\ref{sec:conclusion}.

\section{Preliminaries}\label{sec:problem}
We describe two practical (labeled and unlabeled) formulations of the tabletop 
object rearrangement problems using external buffers, and discuss the important 
\emph{dependency graph} structure for both settings.

\vspace{-1mm}
\subsection{Labeled Tabletop Rearrangement with External Buffers}
\vspace{-1mm}
Consider a bounded workspace $\mathcal W \subset \mathbb R^2$ with a set of 
$n$ objects $\mathcal O = \{o_1, \ldots, o_n\}$ placed inside it. All objects 
are assumed to be \emph{generalized cylinders} with the same height. A 
\emph{feasible arrangement} of these objects is a set of poses $\mathcal A 
=\{x_1,\ldots, x_n\}, x_i \in SE(2)$ in which no two objects collide. 
Let $\mathcal A_1 = \{x_1^s, \ldots, x_n^s\}$ and $\mathcal A_2 = 
\{x_1^g, \ldots, x_n^g\}$ be two feasible arrangements, a tabletop object 
rearrangement problem \cite{han2018complexity} seeks a plan using 
\emph{pick-n-place} operations that move the objects from $\mathcal A_1$ to 
$\mathcal A_2$ (see Fig.~\ref{fig:ex-prob}(a) for an 
example with 7 uniform cylinders). In each pick-n-place operation, an object 
is grasped by a robot arm, lifted above all other objects, transferred to and 
lowered at a new pose $p \in SE(2)$ where the object will not be in collision 
with other objects, and then released. A pick-n-place operation can be 
formally represented as a 3-tuple $a = (i, x', x'')$, denoting that object 
$o_i$ is moved from pose $x'$ to pose $x''$. A full rearrangement plan $P = (a_1, a_2, 
\ldots)$ is then an ordered sequence of pick-n-place operations. 

\begin{figure}[h]
    \centering
\begin{overpic}
[width=0.45\textwidth]{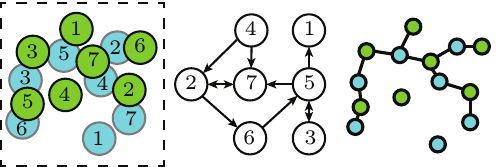}
\put(13, -4){{\footnotesize (a)}}
\put(48, -4){{\footnotesize (b)}}
\put(80, -4){{\footnotesize (c)}}
\end{overpic}
\vspace{4.5mm}
    \caption{A 7-object labeled instance with uniform cylinders; we 
    will use this instance as a running example. (a) The 
    green discs (as projections of cylinders) represent the start arrangement
    $\mathcal{A}_1$ and the cyan discs represent the goal arrangement $\mathcal{A}_2$. 
    (b) The corresponding labeled dependency graph. (c) The corresponding 
    unlabeled dependency graph, which is bipartite and planar.}
    \label{fig:ex-prob}
    \vspace{0mm}
\end{figure}

Depending on $\mathcal A_1$ and $\mathcal A_2$, it may not always be 
possible to directly transfer an object $o_i$ from $x_i^s$ to $x_i^g$
in a single pick-n-place operation, because $x_i^g$ may be occupied 
by other objects. This creates \emph{dependencies} between objects. 
If object $o_i$ at pose $x^g_i$ intersects object $o_j$ at 
pose $x^s_j$, we say $o_i$ \emph{depends} on $o_j$. This suggests that 
object $o_j$ must be moved first before $o_i$ can be placed at its goal
pose $x^g_i$. 

It is possible to have circular dependencies, e.g., between objects $3$
and $5$ in Fig.~\ref{fig:ex-prob}(a). In such cases, some object(s) 
must be temporarily moved to an intermediate pose to solve the rearrangement
problem. Similar to \cite{han2018complexity}, we assume that \emph{external 
buffers} outside of the workspace are used for assuming intermediate poses, 
which avoids time-consuming geometric computations if the intermediate poses 
are to be placed within $\mathcal W$. During the execution of a 
rearrangement plan, there can be multiple objects that are stored at buffer 
locations. We call the buffers currently being used as \emph{running 
buffers} (\rb). %
With the introduction of buffers, there are three types of pick-n-place 
operations: 1) pick an object at its start pose and place at a buffer,
2) pick an object at its start pose and place at its goal pose, and 3) 
pick an object from buffer and place at its goal pose. Notice that buffer 
poses are not important. 
Naturally, it is desirable to be able to solve a rearrangement problem
with the least number of running buffers, yielding the running buffer 
minimization problem. 

\vspace{-1mm}
\begin{problem}[Labeled Running Buffer Minimization (\lrbm)]\label{p:1} Given 
feasible arrangements $\mathcal A_1$ and $\mathcal A_2$, find a rearrangement 
plan $P$ that minimizes the maximum number of running buffers used at any 
given time. 
\end{problem}
\vspace{-1mm}

In an \lrbm instance, the set of all dependencies induced by $\mathcal 
A_1$ and $\mathcal A_2$ can be represented using a directed graph $\ldg = 
(V, A)$, where each $v_i \in V$ corresponds to object $o_i$ and there is an 
arc $v_i \to v_j$ for $1 \le i, j \le n, i \ne j$ if object $o_i$ 
depends on object $o_j$. We call $\ldg$ a \emph{labeled dependency graph}. 
The labeled dependency graph for Fig.~\ref{fig:ex-prob}(a) is given in 
Fig.~\ref{fig:ex-prob}(b).
We can immediately identify multiple circular dependencies in the graph, 
e.g., between objects $3$ and $5$, or among objects $7, 2, 6$ and $5$. 
It is not difficult to see that the dependency graph abstraction fully 
captures the information needed to solve a tabletop rearrangement problem 
with external buffers. 

\vspace{-1mm}
\subsection{Unlabeled Tabletop Rearrangement Problem}
\vspace{-1mm}
In an unlabeled setting, objects are interchangeable. That is, it does not 
matter which object goes to which goal. For example, in Fig.~\ref{fig:ex-prob}, 
object $5$ can move to the goal for object $6$. We call this version the 
Unlabeled Running Buffer Minimization (\urbm) problem, which is intuitively 
easier. The plan for the unlabeled problem can be represented similarly as the 
labeled setting; we continue to use labels but do not require matching labels 
for start and goal poses.

For the unlabeled setting, there clearly remains dependency between start 
and goal arrangements, but in a different form. We update the \emph{unlabeled} dependency graph 
for \urbm as an undirected \emph{bipartite} graph between the start 
arrangement and the goal arrangement. That is, $\udg = (V_1\cup V_2, E)$ 
where each $v \in V_1$ (resp., $v \in V_2$) corresponds to a start (resp., goal) 
pose $p \in \mathcal A_1$ (resp., $p \in \mathcal A_2$). We denote the vertices representing the start and goal poses as \emph{start vertices} and \emph{goal 
vertices}, respectively. There is an edge between 
$v_1 \in V_1$ and $v_2 \in V_2$ if the objects at the corresponding poses 
overlap. The unlabeled dependency graph for Fig.~\ref{fig:ex-prob}(a) is given 
in Fig.~\ref{fig:ex-prob}(c). 

We make a straightforward but important observation of the unlabeled dependency 
graph when objects are uniform cylinders, which is a key sub-class of 
\toro problems, e.g., many products can be approximated as uniform cylinders. 
\vspace{-1mm}
\begin{proposition}\label{p:udg}
For unlabeled tabletop object rearrangement problems where all objects are identical 
cylinders, the unlabeled dependency graph is a planar bipartite graph with maximum 
degree $5$. 
\end{proposition}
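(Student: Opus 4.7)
The plan is to verify the three asserted properties --- bipartiteness, planarity, and maximum degree $5$ --- in turn. Bipartiteness is immediate from the construction of $\udgg$: the vertex set is partitioned into start vertices $V_1$ and goal vertices $V_2$, and every edge joins a vertex of $V_1$ to one of $V_2$. The remaining content is purely geometric, using only that all disks share a common radius $r$ and that within each arrangement the disks are pairwise non-overlapping.

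For planarity, I would exhibit an explicit straight-line drawing by placing each vertex at the center of its associated disk and drawing each edge as the segment between the two centers; it then suffices to show that no two such segments cross. Suppose for contradiction edges $(s_1,g_1)$ and $(s_2,g_2)$ meet at an interior crossing. Then the four centers form a convex quadrilateral whose diagonals are exactly these two segments. Applying the standard fact that in any convex quadrilateral the sum of the two diagonals strictly exceeds the sum of either pair of opposite sides gives $|s_1-g_1|+|s_2-g_2| > |s_1-s_2|+|g_1-g_2|$. However, an edge is present only when the two disks overlap, so $|s_1-g_1|,|s_2-g_2|<2r$, making the left-hand side strictly less than $4r$; on the other hand, $s_1$ and $s_2$ come from a single feasible arrangement and hence their disks are disjoint, so $|s_1-s_2|\ge 2r$, and likewise $|g_1-g_2|\ge 2r$, so the right-hand side is at least $4r$ --- a contradiction.

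For the degree bound, fix a start vertex $s$; the goal-vertex case is symmetric. Its neighbors are goal centers $g_1,\dots,g_k$ with $|s-g_i|<2r$ and pairwise $|g_i-g_j|\ge 2r$. Sort the $g_i$ by polar angle around $s$, so the $k$ consecutive angular gaps sum to $2\pi$. For any two consecutive neighbors with angular separation $\theta=\angle g_i s g_j$, the law of cosines combined with the elementary inequality $a^2+b^2-ab\le \max(a,b)^2$ shows that $\theta\le\pi/3$ would force $|g_i-g_j|^2 \le \max(|s-g_i|,|s-g_j|)^2 < 4r^2$, contradicting $|g_i-g_j|\ge 2r$. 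Every angular gap therefore strictly exceeds $\pi/3$, so $k\pi/3 < 2\pi$ and $k\le 5$.

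The hard part will be the planarity step: a naive argument using only the triangle inequality on each crossing edge handles the cases in which the two endpoints closest to the crossing point lie in the same arrangement, but stalls in the mixed case, where the crossing is close to the start endpoint of one edge and the goal endpoint of the other. The convex-quadrilateral inequality relating diagonals and opposite sides is exactly the algebraic lever that treats all configurations uniformly. The degree bound is then a short angular-packing argument; the improvement from the classical kissing number $6$ down to the claimed value $5$ comes precisely from the strict inequality $|s-g|<2r$ in the definition of an edge.
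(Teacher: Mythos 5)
Your proof is correct, and it is considerably more complete than the one in the paper, which disposes of the proposition in two sentences: bipartiteness and planarity are said to ``come directly from the problem setup,'' and the degree bound is justified only by the remark that a disc can touch six but non-trivially intersect at most five pairwise non-overlapping discs of the same radius. Your degree-$5$ argument is a rigorous rendering of exactly that remark --- the angular-packing computation, with the strict inequality $|s-g_i|<2r$ forcing each consecutive angular gap strictly above $\pi/3$, is precisely why the bound is $5$ rather than the kissing number $6$, as you observe. The genuinely new content relative to the paper is the planarity step: the paper offers no argument, and planarity of this intersection-type graph is not a formal consequence of the ``setup'' alone; it really does require a geometric argument such as your straight-line embedding at the disc centers combined with the convex-quadrilateral inequality (sum of diagonals exceeds sum of either pair of opposite sides), which rules out crossings because the two overlapping start--goal pairs give diagonals of total length below $4r$ while the two same-arrangement sides already total at least $2r+2r$. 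The only caveat worth a sentence in a careful write-up is degeneracy: if a start pose coincides with a goal pose the drawing places two vertices at one point, and collinear configurations can make segments overlap rather than cross at a single interior point; a standard general-position perturbation, or treating the zero-length edge separately, dispatches these cases without affecting the bounds.
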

\begin{proof}
The bipartite and planar part come directly from the problem setup. Since we 
work with uniform cylinders which have uniform disc base, one disc may only 
touch six non-overlapping discs and non-trivially intersect at most five 
non-overlapping discs. 
\end{proof}

\section{Structural Analysis and NP-Hardness}\label{sec:structure}
In this section, we highlight some important structural properties of \lrbm, including 
(1) the comparison to minimizing the total number of buffers \cite{han2018complexity}, 
(2) the solutions of \lrbm and the \emph{linear arrangement}\cite{shiloach1979minimum} or 
\emph{linear ordering} \cite{adolphson1973optimal} of its dependency graph, and (3) the 
hardness of computing \mrb for labeled dependency graphs. 

\vspace{-2mm}
\subsection{Running Buffer versus Total Buffer}
As mentioned in the introduction, running buffers are related to but 
different from the total number of buffers required, as studied in 
\cite{han2018complexity}, to solve a rearrangement problem using 
external buffers. It was shown that the minimum number of total 
buffers for solving an \lrbm is the same as the size of the minimum 
\emph{feedback vertex set} (FVS) of the underlying dependency graph. 
An FVS is a set of vertices the removal of which leaves a graph 
acyclic. An \lrbm with an acyclic dependency graph can be solved 
without using any buffer. We denote the size of the minimum FVS as 
\fvs. 

As an example, for \lrbm, consider a labeled dependency graph 
that is formed by $n$ copies of $2$-cycles. The \fvs is $n$. On the 
other hand, the \mrb is just $1$ for the problem. That is, only a 
single external buffer is needed to solve the problem. Therefore, 
whereas the total number of buffers used has more bearing on global 
solution optimality, \mrb sheds more light on  \emph{feasibility}. 
Knowing the \mrb tells us whether a certain number of external 
buffers will be sufficient for solving a class of rearrangement problems. 
This is critical for practical applications where the number of 
external buffers is generally limited to be a small constant. 

We give an example where the \mrb and \fvs cannot always be optimized 
simultaneously. 
For the setup (Fig.~\ref{fig:ex-mrb-fvs}) where objects have convex
footprints, the \fvs, $\{7, 9, 10\}$, has size $3$. Using our algorithms, to be 
detailed later, the \mrb is $2$ (e.g., with the sequence $10, 8, 4, 5, 3, 6, 7, 1,
 2, 9$, the interpretation of which is given in Sec.~\ref{subsec:logvrb}).
However, constrained on $\mrb=2$, the total number of buffers that 
must be used is at least $4 > 3 $. We note that, this is rarely the case; for 
uniform cylinders, the total number of buffers needed after first minimizing 
the running buffer is almost always the same as the \fvs size. 
\begin{figure}[h!]
    \centering
    \vspace{2mm}
    \begin{overpic}[height=1.2in]{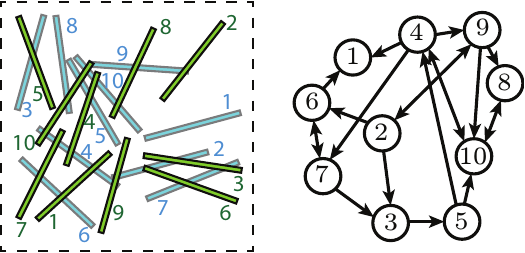}
    \end{overpic}
    \vspace{1mm}
    \caption{An \lrbm instance with uniform thin cuboids (left) and its 
    labeled dependency graph, where the total number of buffers needed is 
    more than the size of the \fvs when the number of running buffers is 
    minimized.}
    \label{fig:ex-mrb-fvs}
\end{figure}

\vspace{-2mm}
\subsection{Linear Ordering of Graph Vertices and Running Buffer}\label{subsec:logvrb}
Given a graph with vertex set $V$, a \emph{linear ordering} of $V$ is a bijective 
function $\varphi: \{1, \ldots, |V|\} \to V$. Given a dependency graph $G = (V, A)$ 
for an \lrbm and a linear ordering $\varphi$, we may turn it into a plan 
$P$ by sequentially picking up objects corresponding to vertices $\varphi(1), \varphi(2), 
\ldots$ For each object that is picked up, it is moved to its goal pose if it has no 
further dependencies; otherwise, it is stored in the buffer. Objects already in the 
buffer will be moved to their goal pose at the earliest possible opportunity. 

For example, given the linear ordering $1, 5, 6, 3, 4, 2, 7$ for the dependency 
graph from Fig.~\ref{fig:ex-prob}(b), first, $o_1$ can be directly moved to its 
goal. Then, $o_5$ is moved to the buffer because it has dependency on $o_3$ and $o_7$ 
(but no longer on $o_1$). Then, $o_6$ can be directly moved to the buffer because 
$o_5$ is now at a buffer location. Similarly, $o_3$ can be moved to its goal next. 
Then, $o_4$ and $o_2$ must be moved to buffer, after which $o_7$ can be moved to its
goal directly. Finally, $o_2, o_4$, and $o_5$ can be moved to their respective goals 
from the buffer. This leads to a maximum running buffer size of $3$. This 
is not optimal; an optimal sequence is $5, 6, 2, 7, 4, 3, 1$, with $\mrb = 2$. Both 
sequences are illustrated in Fig.~\ref{fig:lr}.
\begin{figure}[h!]
\vspace{1mm}
    \centering
    \begin{overpic}
    [width=0.45\textwidth]{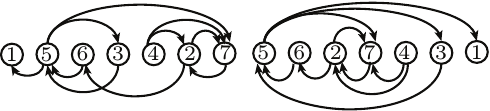}
    \put(22, -3){{\footnotesize (a)}}
    \put(74, -3){{\footnotesize (b)}}
    \end{overpic}
    \vspace{3mm}
    \caption{Two linear orderings of vertices of the labeled 
    dependency graph from Fig.~\ref{fig:ex-prob}(b). The right one minimizes \mrb.}
    \label{fig:lr}
\end{figure}

From the discussion, we may view the number of running buffers as a function of 
a dependency graph $G$ and a linear ordering $\varphi$, i.e., $\rb(G, \varphi)$ is 
the number of running buffers needed for rearranging $G$ following the order 
given by $\varphi$. We then have $\mrb(G) = \min_{\varphi}\rb(G, \varphi)$.

\subsection{Intractability of Computing $\mrb(G)$}
Since computing \fvs is NP-hard \cite{han2018complexity}, one would expect that computing \mrb 
for a labeled dependency graph, which can be any directed graph, is also hard. We show 
that this is indeed the case, through examining the interesting relationship between 
\mrb and the \emph{vertex separation problem} (\vsp), which is equivalent to path 
width, gate matrix layout and search number problems as described in Theorem 3.1 in 
\cite{diaz2002survey}, resulting from a series of studies 
\cite{kirousis1986searching, kinnersley1992vertex, fellows1989search}.
 Unless $P=NP$, there cannot be an absolute approximation algorithm
for any of these problems \cite{bodlaender1995approximating}. First, we describe the 
vertex separation problem. 
Intuitively, given an undirected graph $G = (V, E)$, \vsp seeks a linear ordering $\varphi$ 
of $V$ such that, for a vertex with order $i$, the number of vertices come no later than 
$i$ in the ordering, with edges to vertices that come after $i$, is minimized. 
%


\vspace{1mm}
\noindent\fbox{\begin{minipage}{3.3in}
\vspace{1mm}
\noindent
\textbf{Vertex Separation (\vsp)}\\
\noindent
\textbf{Instance}: Graph $G(V,E)$ and an integer $K$.\\
\noindent
\textbf{Question}: Is there a bijective function $\varphi: \{1, \dots, n\} \to 
V$, such that for any integer $1 \le i \le n$, 
$|\{u\in V \mid \exists (u, v) \in E \ and \ \varphi(u)\leq i < \varphi(v) \}| \leq K$?
\vspace{1mm}
\end{minipage}}
\vspace{1mm}

As an example, in Fig.~\ref{fig:vsp}(a), with the given linear ordering, at the second 
vertex, both the first and the second vertices have edges crossing the vertical separator, 
yielding a crossing number of $2$. Given a graph $G$ and a linear ordering $\varphi$, we 
define $\vertsep(G, \varphi) :=
\max_i | \{u\in V \mid \exists (u, v) \in E  \ and \  \varphi(u)\leq i < \varphi(v) 
\} |$, \vsp seeks $\varphi$ that minimizes $\vertsep(G, \varphi)$. 
Let $\minvs(G)$, the vertex separation number of graph $G$, be the minimum $K$ for
which a \vsp instance has a yes answer, then $\minvs(G) = \min_\varphi \vertsep(G, \varphi)$.
Now, given an undirected
graph $G$ and a labeled dependency graph $\ldgg$ obtained from $G$ by replacing each 
edge of $G$ with two directed edges in opposite directions, we observe that there 
are clear similarities between $\vertsep(G, \varphi)$ and $\rb(\ldgg, \varphi)$, which 
is characterized in the following lemma. 

\begin{lemma}
$\vertsep(G, \varphi) \leq \rb(\ldgg, \varphi) \leq \vertsep(G, \varphi) + 1$.
\end{lemma}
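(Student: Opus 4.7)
The plan is to track, after each pick-and-place round induced by $\varphi$, precisely which vertices sit in buffer locations, and then compare this count with the crossing set that defines $\vertsep(G,\varphi)$. For $i = 0,1,\dots,n$, set
\[ B(i) := \{\, u \in V \mid \varphi(u) \leq i \text{ and some neighbor of } u \text{ in } G \text{ has } \varphi\text{-value} > i \,\}. \]
Reading the \vsp definition directly, $|B(i)|$ equals the crossing count at position $i$, so $\vertsep(G,\varphi) = \max_{0 \le i \le n} |B(i)|$.

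The crux is an inductive claim: after round $i$ of the execution rule of Sec.~\ref{subsec:logvrb}, the set of objects residing at buffer locations is exactly $B(i)$. The base case $B(0)=\varnothing$ is trivial. For the inductive step, let $u=\varphi(i)$. Because every edge of $G$ appears in $\ldgg$ as a pair of oppositely oriented arcs, $u$'s still-unresolved dependencies at its pick moment are precisely the neighbors $v$ of $u$ in $G$ still at their start poses, i.e., those with $\varphi(v) > i-1$. If this set is empty, $u$ goes straight to goal and $u \notin B(i)$; otherwise $u$ is deposited in a buffer, and since some neighbor then has $\varphi$-value $\geq i+1 > i$, we have $u\in B(i)$. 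For a previously buffered $w\in B(i-1)$, $u$'s departure from its start may resolve $w$'s last outstanding dependency, in which case $w$ is dispatched to goal by the earliest-opportunity rule. A key observation is that moving $w$ from buffer to goal does not alter the ``at start'' status of any vertex, so the rule terminates after a single sweep, and $w$ remains in the buffer iff some neighbor of $w$ still has $\varphi$-value greater than $i$. This matches $B(i)$, closing the induction.

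Given the characterization, the two inequalities are immediate. Immediately after round $i$ completes, $|B(i)|$ buffer locations are simultaneously occupied, so
\[ \rb(\ldgg,\varphi) \geq \max_i |B(i)| = \vertsep(G,\varphi). \]
For the upper bound, between rounds the buffer holds at most $\max(|B(i-1)|,|B(i)|) \leq \vertsep(G,\varphi)$ objects. The only transient excess occurs within a round in which $u$ has been placed into a buffer but the ensuing earliest-opportunity dispatches have not yet emptied any slots; at that instant the buffer holds $|B(i-1)|+1 \leq \vertsep(G,\varphi)+1$ objects. Taking the maximum over rounds gives $\rb(\ldgg,\varphi) \leq \vertsep(G,\varphi)+1$.

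\textbf{Main obstacle.} The delicate ingredient is the inductive characterization of $B(i)$. One must verify that earliest-opportunity dispatches cannot cascade (moving a buffered vertex to its goal leaves every vertex's ``at start'' status unchanged) and that the buffer therefore stabilizes to exactly $B(i)$ within a single pass. Once this is secured, both inequalities reduce to bookkeeping of peak occupancy between and within rounds.
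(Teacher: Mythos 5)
Your proof is correct and follows essentially the same route as the paper's proof sketch: identifying the buffered objects at each stage with the crossing set of the separator, and accounting for the extra $+1$ from the object that must transit through the buffer before earlier buffered objects can be dispatched. Your inductive characterization of the buffer contents as exactly $B(i)$ is a rigorous filling-in of what the paper only illustrates by example.
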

\begin{proof}[Proof sketch]
Fixing a linear ordering  $\varphi$, it is clear that $\vertsep(G, \varphi) \le 
\rb(\ldgg, \varphi)$, since the vertices on the left side of a separator with edges 
crossing the separator for $G$ corresponds to the objects that must be stored 
at buffer locations. For example, in Fig.~\ref{fig:vsp}(a), past the second vertex
from the left, both the first and the second vertices have edges crossing the 
vertical ``separator''. In the corresponding dependency graph shown in 
Fig.~\ref{fig:vsp}(b), objects corresponding to both vertices must be moved 
to the external buffer. 
%
%
On the other hand, we have $\rb(\ldgg, \varphi) \leq \vertsep(G, \varphi) + 1$ 
because as we move across a vertex in the linear ordering, the corresponding 
object may need to be moved to a buffer location temporarily. 
For example, as the third vertex from the left in Fig.~\ref{fig:vsp}(a) is 
passed, the vertex separator drops from $2$ to $1$, but for dealing with 
the corresponding dependency graph in Fig.~\ref{fig:vsp}(b), the object 
corresponding to the third vertex from the left must be moved to the 
buffer before the first and the second objects stored in buffer can be 
placed at their goals. 
\end{proof}

\begin{figure}[h!]
\vspace{-1mm}
    \centering
\begin{overpic}
[width=0.90\columnwidth]{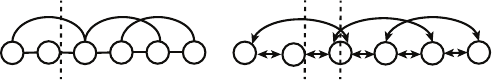}
\put(19, -3){{\footnotesize (a)}}
\put(72, -3){{\footnotesize (b)}}
\end{overpic}
\vspace{3mm}
\caption{(a) An undirected graph and a linear ordering of its vertices. (b)
A corresponding labeled dependency graph with the same vertex ordering.}
\label{fig:vsp}
\vspace{-2mm}
\end{figure}

\begin{theorem}
Computing \mrb, even with an absolute approximation, for a labeled dependency graph is NP-hard. 
\end{theorem}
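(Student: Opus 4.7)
The plan is to reduce the vertex separation problem (\vsp) to the computation of \mrb on labeled dependency graphs, using the construction and the lemma already established. Given an undirected graph $G=(V,E)$, build $\ldgg$ in polynomial time by replacing every edge $\{u,v\}\in E$ with the two arcs $u\to v$ and $v\to u$. The preceding lemma gives, for every linear ordering $\varphi$, the sandwich inequality $\vertsep(G,\varphi)\le \rb(\ldgg,\varphi)\le \vertsep(G,\varphi)+1$. Taking a minimum over $\varphi$ on both sides then yields
\begin{equation*}
\minvs(G)\;\le\;\mrb(\ldgg)\;\le\;\minvs(G)+1.
\end{equation*}

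First I would use this chain of inequalities to get plain NP-hardness: any exact algorithm for \mrb on $\ldgg$ would allow us, after one extra comparison, to decide whether $\minvs(G)\le K$ for a given integer $K$, since the unknown value $\mrb(\ldgg)$ only ranges over $\{\minvs(G),\minvs(G)+1\}$. Concretely, to decide the \vsp instance $(G,K)$ one computes $\mrb(\ldgg)$, answers \emph{yes} if $\mrb(\ldgg)\le K$, and otherwise checks (with a second \mrb query on a slightly padded graph, or directly by the bound above) whether the extra unit gap is responsible. Because \vsp is NP-hard, this reduction establishes NP-hardness of computing \mrb exactly on labeled dependency graphs.

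Next I would upgrade this to hardness of absolute approximation. Suppose, toward a contradiction, that there is a polynomial-time algorithm $\mathcal{B}$ that returns a value within an additive constant $c$ of $\mrb(\ldgg)$ for every labeled dependency graph. Composing $\mathcal{B}$ with the reduction $G\mapsto\ldgg$ and invoking the sandwich inequality, the output of $\mathcal{B}(\ldgg)$ lies within $c+1$ of $\minvs(G)$, giving an absolute $(c+1)$-approximation algorithm for \minvs (equivalently, for pathwidth, gate matrix layout and search number). This contradicts the result of Bodlaender, Gilbert, Hafsteinsson and Kloks cited in \cite{bodlaender1995approximating}, which rules out any absolute approximation for these problems unless $P=NP$. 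Hence no absolute approximation for \mrb can exist either, completing the theorem.

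The main obstacle I anticipate is purely a bookkeeping one: making sure the reduction is a genuine \emph{additive} approximation-preserving reduction and that the constant $+1$ slack from the lemma is truly absorbed into a constant, rather than blowing up with the instance size. Since the lemma's slack is a universal $+1$ independent of $|V|$ and of the ordering, the argument goes through cleanly; no further padding or amplification is required, and the reduction is clearly polynomial since $|\ldgg|=|V|+2|E|$.
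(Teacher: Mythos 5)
Your proof is correct and takes essentially the same route as the paper: the same bidirected construction $G\mapsto\ldgg$, the same sandwich inequality $\minvs(G)\le\mrb(\ldgg)\le\minvs(G)+1$ from the preceding lemma, and the same contradiction with the non-approximability of \vsp from \cite{bodlaender1995approximating}. One minor remark: the first paragraph's direct reduction for \emph{exact} computation is incomplete as written, since when $\mrb(\ldgg)=K+1$ the value of $\minvs(G)$ could be either $K$ or $K+1$ and the promised ``second query'' is never specified; but this is harmless, because your absolute-approximation argument with $c=0$ already subsumes exact hardness, which is all the theorem claims.
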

\vspace{-1mm}
\begin{proof}
Given an undirected graph $G$, we reduce from approximating \vsp within a 
constant to approximating \mrb within a constant for a dependency 
graph  $\ldgg$ from $G$ constructed as stated before, replacing each edge in $G$ as a 
bidirectional dependency. 

Unless $P=NP$, \vsp does not have absolute approximation in polynomial time. 
Henceforth, if $\mrb(\ldgg, \varphi)$  can be approximated within $\alpha$ in polynomial time, 
which means 
for graph $G$, we can find a $\varphi^*$ in polynomial time such that $\rb(\ldgg, \varphi^*) 
\leq  \mrb(\ldgg) + \alpha$, we then have $\vertsep(G, \varphi^*)\leq \rb(\ldgg, \varphi^*)  
\leq \alpha + \mrb(\ldgg) \leq \minvs(G)+\alpha+1$, which shows vertex separation can have 
an absolute approximation, implying $P=NP$.
\end{proof}

\section{Lower and Upper Bounds on \mrb}\label{sec:bounds}
We proceed to establish bounds on \mrb, i.e., what is the lowest 
possible \mrb for \lrbm and \urbm, and what is the best that we can do to lower 
\mrb? 
An important outcome is that \mrb can grow unbounded with the number of objects, 
even for \urbm when objects are all uniform cylinders. Another very interesting 
result is that we are able to close the gap between lower and upper bound for 
\urbm for uniform cylinders. 

\vspace{-1mm}
\subsection{Intrinsic \mrb Lower Bounds} 
\vspace{-1mm}
When there is no restrictions on object footprint, \mrb can easily reach the maximum 
possible $n -1$ for an $n$ object instance, even in the \urbm case. Such an example is 
given in Fig.~\ref{fig:stick}, where $n = 6$ thin cuboids are aligned horizontally 
in $\mathcal A_1$, one above the other. The cuboids are vertically aligned in 
$\mathcal A_2$, and every pair of start pose and goal pose induces a collision.
Clearly, this yields a bidirectional $K_6$ labeled dependency graph in the \lrbm 
case and a $K_{6, 6}$ unlabeled dependency graph in the \urbm case. For both, 
$n - 1 = 5$ objects must be moved to buffer before the problem can be resolved. 
The example clearly can be generalized to an arbitrary number of objects. 

\vspace{-1mm}
\begin{proposition}
\mrb lower bound is $n - 1$ for $n$ objects for both \lrbm and \urbm, which is 
the maximum possible, even for uniform convex shaped objects. 
\end{proposition}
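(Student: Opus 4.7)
The plan splits into two halves. I would first exhibit an $n$-object family of instances with uniform convex footprints whose \mrb equals $n-1$, and then separately argue that no $n$-object instance can ever force more than $n-1$ running buffers. Since \urbm is a relaxation of \lrbm on any fixed geometry, a lower bound established in the \urbm form transfers automatically to \lrbm, so I would present the construction on the (harder) \urbm side.

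For the construction, take $n$ identical thin rectangular cuboids. In $\mathcal A_1$, lay them out as $n$ horizontal bars stacked one above another in a column; in $\mathcal A_2$, lay them out as $n$ vertical bars side-by-side in a row, with the column and row positioned so that every horizontal bar crosses every vertical bar. Thinness ensures that the $n$ bars in each arrangement are pairwise non-overlapping, so both $\mathcal A_1$ and $\mathcal A_2$ are feasible, and by construction the overlap pattern between them is exactly $K_{n,n}$; that is, $\udgg \cong K_{n,n}$. To see $\mrb \geq n-1$, I would observe that in $K_{n,n}$ every goal pose is blocked by every occupied start pose, so neither type-2 (start-to-goal) nor type-3 (buffer-to-goal) pick-and-place operations can be executed while any start is still occupied. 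Consequently the first $n-1$ pick-ups from starts must each be type-1 operations that deposit their objects into buffers, and just before the $n$-th start is vacated the buffer simultaneously holds $n-1$ objects. The \lrbm bound follows immediately since \lrbm on the same geometry can only be at least as constrained.

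For the matching universal upper bound, I would argue that every $n$-object instance admits a plan with at most $n-1$ running buffers: ferry the first $n-1$ objects picked from starts into buffers, then pick up the last remaining start object and place it directly at an available goal (feasible because at that moment no start is occupied so no start-to-goal dependency is active, and goal poses are pairwise non-overlapping since $\mathcal A_2$ is feasible), and finally drain the buffer one object at a time. The one mild obstacle I anticipate is confirming the drain step in the unlabeled setting, where one must also exhibit a matching between the remaining buffered objects and the still-vacant goals; this is immediate because at every stage the number of vacant goals equals the number of buffered-plus-held objects, and since all goals are unblocked once the starts are empty any bijection works. Combining the two halves yields $\mrb = n-1$ on the constructed family and $\mrb \leq n-1$ on every $n$-object instance, which is the full proposition.
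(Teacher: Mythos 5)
Your proposal is correct and matches the paper's argument: the paper uses exactly the same construction of $n$ thin cuboids laid horizontally in $\mathcal A_1$ and vertically in $\mathcal A_2$ so that the unlabeled dependency graph is $K_{n,n}$ (and the labeled one a bidirectional $K_n$), forcing $n-1$ objects into buffers before any goal can be filled. The only difference is that you also spell out the trivial universal upper bound of $n-1$, which the paper asserts without proof; that addition is sound and harmless.
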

\vspace{-1mm}

\begin{figure}[h!]
    \centering
    \includegraphics[width=0.15\textwidth]{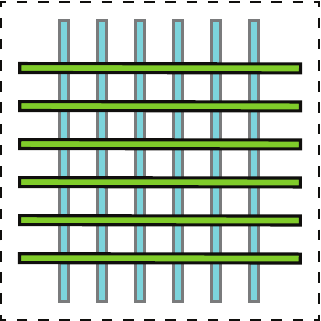}
    \vspace{1mm}
    \caption{An instance with 6 cuboids where horizontal and vertical sets represent 
    start and goal poses, respectively.}
    \label{fig:stick}
\end{figure}

The lower bound on \mrb being $\Omega(n)$ is undesirable, but it is established using 
objects that are ``thin''. Everyday objects are not often like that. An ensuing question of 
high practical value is then: what happens when the footprint of the objects are ``nicer''? 
Next, we show that, the lower bound drops to $\Omega(\sqrt{n})$ for uniform cylinders, 
which approximate many real-world objects/products. Further more, we show that this lower 
bound is tight for \urbm (in Section~\ref{subsec:upper}).

We first establish the $\Omega(\sqrt{n})$ lower bound for \urbm. 
For convenience, assume $n$ is a perfect square, i.e., $n = m^2$ for some integer 
$m$. 
To get to the proof, a grid-like unlabeled dependency graph is used, which we call 
a \emph{dependency grid}, where $\mathcal A_1$ and $\mathcal A_2$ have fixed grid
(rotated by $\pi/4$) patterns, an example of which is given in Fig.~\ref{fig:DependencyGrid}. We use 
$\D(w, h)$ to denote a dependency grid with $w$ columns and $h$ rows. Let $(x, y)$ 
be the coordinate of a vertex $v_{x,y}$ on $\D(w, h)$ with the top left being $(1,
1)$. The parity of $x + y$ determines the partite set of the vertex (recall that 
unlabeled dependency graph for uniform cylinders is always a planar bipartite 
graph, by Proposition~\ref{p:udg}), which may correspond to a start pose or 
a goal pose. With this in mind, we simply call vertices of $\D(w, h)$ start and 
goal vertices; let $v_{1,1}$ be a start vertex. 
%
%



\begin{figure}[h!]
    \centering
    \includegraphics[width=0.35\textwidth]{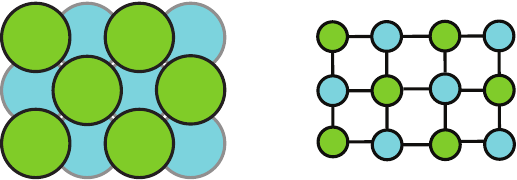}
    \vspace{1mm}
    \caption{A \urbm instance (left) and its unlabeled dependency graph (right), a 
    $4\times 3$ dependency grid. Green and cyan indicate start and goal 
    arrangements, respectively.}
    \label{fig:DependencyGrid}
\end{figure}


We use $\D(m, 2m)$ for establishing the lower bound on \mrb. We use a \emph{vertex 
pair} $p_{i, j}$ to refer to two adjacent vertices $v_{i, 2j-1}$ and $v_{i, 2j}$ in 
$\D(m, 2m)$. It is clear that a vertex pair contains a start and a goal vertex. 
We say that a goal vertex is \emph{filled} if an object is placed at the corresponding goal pose. 
We say that a start vertex (which belongs to a vertex pair) is \emph{cleared} if the 
corresponding object at the vertex is picked (either put at a goal or at a buffer)
but the corresponding goal in the vertex pair is not filled. 
At any moment when the robot is not holding an object, the number of objects in the 
buffer is the same as the number of cleared vertices. For each column $i, 1\leq i 
\leq m$, let $f_i$ (resp., $c_i$) be the number of goal (resp., start) vertices in 
the column that are filled (resp., cleared). Notice that a goal cannot be filled 
until the object at the corresponding start vertex is removed.

%

\begin{lemma}\label{lemma:adjacent}
On a dependency grid $\D(m, 2m)$, for two adjacent columns $i$ and $i+1$, $1 \le i < m$, if
$f_i+f_{i+1}\neq 0$ or $2m$, then $c_i+c_{i+1}\geq 1$. In other words, there is at least one 
cleared vertex in the two adjacent columns unless $f_i=f_{i+1}=0$ or $f_i=f_{i+1}=m$.
\end{lemma}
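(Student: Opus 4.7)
The plan is to classify each pair $p_{i',j'}$ in the two-column strip $i'\in\{i,i+1\}$, $1\le j'\le m$, into one of three mutually exclusive states. State~A: the pair's start vertex still holds its object and the pair's goal vertex is empty. State~B: the start vertex has been picked but the pair's goal is still unfilled, which is precisely the ``cleared'' condition, so $c_{i'}$ counts the state-B pairs in column~$i'$. State~C: the pair's goal is filled, which by the dependency constraint (goal and start within a pair are grid-adjacent) forces the pair's start to have been picked already; hence $f_{i'}$ counts the state-C pairs in column~$i'$. The contrapositive hypothesis $c_i=c_{i+1}=0$ then says no pair in the strip is in state~B, so every pair is in A or C; the hypothesis $0<f_i+f_{i+1}<2m$ further requires both an A-pair and a C-pair to coexist somewhere in the strip.

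The core of the proof will be a propagation claim: under the no-B assumption, if any pair of the strip is in state~C, then every pair of the strip is in state~C. The driver is the dependency-graph constraint that when a goal vertex is filled, each of its four grid-neighbors that happens to be a start must have had its object picked already; under the no-B assumption this forces that neighbor's pair into state~C as well. Applying this to the filled goal of a state-C pair $p_{i',j'}$, and carefully identifying for each of the four grid-neighbors the pair to which it belongs, yields two kinds of propagation: a vertical one along a column (forward in $j$ for one column-parity and backward in $j$ for the other, determined by whether the pair's goal sits at the larger or smaller row within the pair), and a lateral one between columns $i$ and $i+1$, which turns out to be bidirectional because in both pair-orientations a filled goal has an in-strip start neighbor in the adjacent column.

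Combining these, a single state-C pair reaches every other pair of the $2\times m$ strip by alternating vertical and lateral hops: propagate through one column in its natural direction, cross laterally, propagate through the other column in its natural (opposite) direction, then cross back. This forces every pair of the strip into state~C, contradicting the existence of an A-pair in the strip. Therefore $f_i+f_{i+1}\in\{0,2m\}$, i.e., either $f_i=f_{i+1}=0$ or $f_i=f_{i+1}=m$, which is the claim.

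The main obstacle will be parity bookkeeping: within a pair, the start is on top in a column $i'$ of one parity and on bottom in one of the other parity, and the identity of any neighboring vertex as a start or a goal is dictated by the parity of the sum of its coordinates. I would therefore carry out the propagation argument once with $i$ odd, then verify the symmetric case with $i$ even. Once the parity is correctly tracked, each individual propagation step follows in one line from the dependency-graph constraint and the absence of state-B pairs in the strip.
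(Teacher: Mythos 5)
Your argument is correct, but it reaches the conclusion by a genuinely different route than the paper. The paper argues directly and locally: it splits into the case where some pair-row $j$ has exactly one of the two goals of $p_{i,j}$, $p_{i+1,j}$ filled (a horizontal mismatch, which immediately exhibits a cleared start in the unfilled pair, adjacent to the filled goal), and the complementary case where every pair-row is homogeneous, so that $0<f_i+f_{i+1}<2m$ forces a vertical transition between a fully filled row $j$ and a fully unfilled row $j+1$ (or vice versa), again exhibiting a single cleared start next to a filled goal. You instead take the contrapositive and run a global propagation argument: assuming no cleared (state-B) pair, a single filled (state-C) pair infects the entire $2\times m$ strip via the bidirectional lateral hops and the two oppositely oriented vertical hops, forcing $f_i+f_{i+1}=2m$ and contradicting the presence of a state-A pair. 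The underlying combinatorial engine is the same in both proofs --- a filled goal is grid-adjacent to the start of each neighboring pair, so that start must be picked, and if its own goal is unfilled it is cleared --- but the paper uses it once at an explicitly located filled/unfilled boundary, whereas you use it repeatedly to show no such boundary can exist without a cleared vertex. Your version is conceptually cleaner (it is essentially a connectivity statement on the pair graph of the strip and makes the exhaustiveness of the case analysis automatic), at the cost of the parity bookkeeping you correctly identify: the verification that lateral propagation is bidirectional while the vertical propagation directions in columns $i$ and $i+1$ are opposite, and that together these hops cover all of $\D(m,2m)$ restricted to the strip. The paper's version is shorter and needs only two local pictures, but its second case quietly relies on the homogeneity of every pair-row to guarantee the existence of adjacent rows of opposite status, a point your contrapositive framing sidesteps entirely. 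I see no gap in your plan; every propagation step you describe checks out against the parity convention of the dependency grid.
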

\begin{proof}
If there is a $j, 1\leq j \leq m$, such that only one of the goal vertices in vertex pairs 
$p_{i,j}$ and $p_{i+1, j}$ is filled (Fig. \ref{fig:LemmaProof}(a)), then the start 
vertex in the other vertex pair must be cleared. Therefore, $c_i+c_{i+1}\geq 1$.

On the other hand, if, for each $j, 1\leq j \leq m$, both or neither of the goal vertices 
in $p_{i,j}$ and $p_{i+1,j}$ is filled, then there is a $j, 1 \leq j \leq m-1$, such that both 
goal vertices in $p_{i,j}$ and $p_{i+1,j}$ are filled but neither of those in $p_{i,j+1}$ and $p_{i+1,j+1}$ is filled (Fig.~\ref{fig:LemmaProof}(b)) or the opposite 
(Fig.~\ref{fig:LemmaProof}(c)). Then, for the vertex pairs whose goal vertices are not filled, 
say $p_{i,j+1}$ and $p_{i+1, j+1}$, one of their start vertices is a neighbor of the filled 
goal in $p_{i,j}$ and $p_{i,j+1}$. Therefore, at least one of the start vertices in $p_{i, j+1}$ 
and $p_{i+1,j+1}$ is a cleared vertex. And thus, $c_i+c_{i+1}\geq 1$.

\begin{figure}[h!]
    \centering
    \vspace{5mm}
    \begin{overpic}
    [width = 0.30\textwidth]{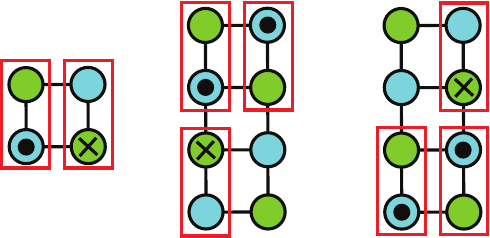}
    \put(0, 38){{\small $p_{i,j}$}}
    \put(11, 38){{\small $p_{i+1,j}$}}
    
    \put(37, 51){{\small $p_{i,j}$}}
    \put(49, 51){{\small $p_{i+1,j}$}}
    \put(34, -3){{\small $p_{i,j+1}$}}
    
    \put(89, 51){{\small $p_{i+1,j}$}}
    \put(70, -3){{\small $p_{i,j+1}$}}
    \put(89, -3){{\small $p_{i+1,j+1}$}}
    
    \put(8, -9.5){{\small $(a)$}}
    \put(44, -9.5){{\small $(b)$}}
    \put(85, -9.5){{\small $(c)$}}
    \end{overpic}
    \vspace{6mm}
    \caption{Some cases discussed in the lemma~\ref{lemma:adjacent}. The green and cyan nodes represent the start and goal vertices in the dependency graph. Specifically, the cyan nodes with a dot inside represent the filled vertices and the green nodes with a cross inside represent the cleared vertices. (a) When only one goal vertex in $p_{i,j}$ and $p_{i+1, j}$ is filled up, the start vertex in the other vertex pair is a cleared vertex. (b) When both goal vertices in $p_{i,j}$ and $p_{i+1,j}$ are filled but neither of those in $p_{i,j+1}$ and $p_{i+1,j+1}$ is filled, one of the start vertices $p_{i,j+1}$ and $p_{i+1,j+1}$ is a cleared vertex. (c) The opposite case of (b).}
    \label{fig:LemmaProof}
\end{figure}
\end{proof}

\begin{lemma}\label{l:urbm-lower}
Given a \urbm instance with $n = m^2$ objects and whose dependency graph is $\D(m, 2m)$, 
its \mrb is lower bounded by $\Omega(m) = \Omega(\sqrt{n})$
\end{lemma}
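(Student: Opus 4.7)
The plan is to choose a specific time snapshot $t^*$ of any valid rearrangement plan and exhibit $\Omega(m)$ cleared-start vertices at that moment. I would first assume without loss of generality that the plan never un-fills a goal (which can only decrease \mrb), so each $f_i(t)$ is nondecreasing; let $t^*$ be the earliest time at which $F(t) := \sum_i f_i(t)$ reaches $\lfloor m^2/2\rfloor$. I then classify each column at $t^*$ into the states E ($f_i = 0$), P ($0 < f_i < m$), or F ($f_i = m$), with counts $e, p, f$; a quick double-count on $\sum_i f_i \approx m^2/2$ yields $e, f \leq m/2$.

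The argument then splits into two cases. In Case~1, both $e \geq 1$ and $f \geq 1$, so I can locate columns $a < b$ such that column $a$ is in state E, column $b$ is in state F, and every column strictly between $a$ and $b$ is in state P. I would then lower-bound the total cleared count across these ``transition'' columns using the grid's row-matched adjacency: in $\D(m, 2m)$ each filled goal at $(i+1, y)$ forces the start at $(i, y)$ to have been moved, so the number of moved starts in column $i$ is at least $\max(f_{i-1}, f_{i+1})$, and subtracting the $f_i$ starts consumed by col-$i$'s filled pairs yields $c_i \geq \bigl[\max(f_{i-1}, f_{i+1}) - f_i\bigr]_+$. Setting $g_j := f_{a+j}$ with $g_0 = 0$ and $g_{b-a} = m$, the cleared counts then telescope:
$$\sum_{i=a}^{b} c_i \;\geq\; \sum_{j=0}^{b-a-1}\bigl[g_{j+1}-g_j\bigr]_+ \;\geq\; \sum_{j=0}^{b-a-1}(g_{j+1}-g_j) \;=\; m,$$
so the buffer at $t^*$ is at least $m$.

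Case~2 covers $e = 0$ or $f = 0$; without loss of generality $f = 0$, so every $f_i \leq m-1$, and $\sum_i f_i \approx m^2/2$ forces $p \geq m/2$. I would then apply Lemma~\ref{lemma:adjacent} to the disjoint column pairs $(2k-1, 2k)$: any such pair containing a P column has $f_{2k-1} + f_{2k} \in (0, 2m)$, hence $c_{2k-1} + c_{2k} \geq 1$. At least $\lceil p/2 \rceil \geq m/4$ disjoint pairs qualify, and since these disjoint pairs partition the columns, the buffer at $t^*$ is at least $m/4$. Combining the two cases, $\mrb \geq m/4 = \Omega(m) = \Omega(\sqrt{n})$.

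The main obstacle will be the bookkeeping in Case~1: justifying $c_i \geq \bigl[\max(f_{i-1}, f_{i+1}) - f_i\bigr]_+$ requires carefully showing that the starts in column $i$ that are forced to be moved by filled goals in columns $i-1$ and $i+1$ are distinct up to a $\min(f_{i-1}, f_{i+1})$-overlap, so their union has size $\max(f_{i-1}, f_{i+1})$; this relies on the specific parity-based row matching of the dependency grid.
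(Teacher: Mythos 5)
Your proposal is correct and its skeleton matches the paper's: freeze the plan at a snapshot where a constant fraction of goals are filled (you use $\lfloor n/2\rfloor$, the paper $\lfloor n/3\rfloor$), identify the buffer size with the number of cleared start vertices, and case-split on the column fill-counts $f_i$. Your Case~2 is essentially the paper's second case: Lemma~\ref{lemma:adjacent} applied to the $\lfloor m/2\rfloor$ disjoint column pairs, each pair meeting a partially filled column contributing one cleared vertex; your counting that at least $\Omega(m)$ such pairs exist when no fully filled (or no empty) column is present is sound. The genuine divergence is in the other case. The paper argues \emph{row by row}: once some column is full and another empty, every one of the $m$ rows of vertex pairs contains both a filled and an unfilled goal, hence a horizontal transition between adjacent columns and one cleared vertex per row (the configuration of Fig.~\ref{fig:LemmaProof}(a)). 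You argue \emph{column by column}, via $c_i \ge \bigl[\max(f_{i-1},f_{i+1})-f_i\bigr]_+$ and a telescoping sum $\sum_j [g_{j+1}-g_j]_+ \ge m$ along a run from an empty to a full column. That inequality is valid, and the ``overlap'' worry you flag is not actually an obstacle: the starts in column $i$ forced by filled goals in column $i+1$ sit in distinct rows, so the picked count in column $i$ is at least $f_{i+1}$ (and at least $f_{i-1}$) outright, while exactly $f_i$ picked starts are consumed by column $i$'s own filled pairs. The only cosmetic point to add is the symmetric telescoping direction when the empty column lies to the right of the full one, which your use of $\max(f_{i-1},f_{i+1})$ already accommodates. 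Net effect: your Case~1 is a slightly more quantitative, purely column-local replacement for the paper's row-transition argument, at the price of having to justify the key per-column inequality; the paper's version avoids that inequality but needs the separate observation that the $m$ row-wise transitions yield distinct cleared vertices.
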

\begin{proof}
We show that there are $\Omega(m)$ cleared vertices when $\lfloor n/3 \rfloor$ goal vertices 
are filled. Suppose there are $q$ columns in $\mathcal{D}$ with $1\leq f_i\leq m-1$. According 
to the definition of $f_i$, for each of these $q$ columns, there is at least one goal vertex 
that is filled and at least one goal vertex that is not.

If $q< \dfrac{\lfloor n/3 \rfloor}{3(m-1)}$, then there are two columns $i$ and $j$, such that $f_i=m$ and $f_j=0$. That is 
because $\sum_{1\leq i \leq m} f_i= \lfloor n/3 \rfloor$ and $0\leq f_i \leq m$ for all $1\leq i \leq m$. 
Therefore, for the vertex pairs in each row $j$, at least one goal vertex is filled 
but at least one is not. And thus, for each $j$, there are two adjacent columns $i, i+1, 1\leq 
i < m$, such that there is only one goal vertex in $p_{i,j}$ and $p_{i+1,j}$ is filled and the
start vertex in the other vertex pair is cleared (Fig. \ref{fig:LemmaProof}(a)). 
Therefore, there are at least $m$ cleared vertices in this case.

If $q\geq \dfrac{\lfloor n/3 \rfloor}{3(m-1)}$, then we partition all the columns in $\mathcal{D}$ into $\lfloor m/2\rfloor$ disjoint pairs: 
(1,2), (3,4), ... The $q$ columns belong to at least $\lfloor q/2\rfloor$ pairs of 
adjacent columns. Therefore, according to Lemma \ref{lemma:adjacent}, we have $\Theta(m)$ 
cleared vertices.

In conclusion, there are $\Omega(m)$ cleared vertices when there are $\lfloor n/3 \rfloor$ 
filled goal. Therefore, the minimum \mrb of this instance is $\Omega(m)$.
\end{proof}

Because a \urbm always have lower \mrb than an \lrbm with the same objects and 
goal placements, the conclusion of Lemma~\ref{l:urbm-lower} directly applies to 
\lrbm. Therefore, we have 

\begin{theorem}
For both \urbm and \lrbm with $n$ uniform cylinders, \mrb is lower bounded by 
$\Omega(\sqrt{n})$.
\end{theorem}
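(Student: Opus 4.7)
The plan is to derive the theorem as a direct corollary of Lemma~\ref{l:urbm-lower}, which already produces, for every perfect square $n=m^2$, a concrete \urbm instance on the dependency grid $\D(m,2m)$ whose \mrb is $\Omega(m)=\Omega(\sqrt{n})$. First I would check that $\D(m,2m)$ is realizable by $n$ uniform cylinders: one places start discs on one square lattice and goal discs on a lattice shifted and rotated by $\pi/4$, with spacing chosen so that the overlapping disc pairs match exactly the edges of the grid, as illustrated in Fig.~\ref{fig:DependencyGrid}. That establishes the \urbm half of the theorem for perfect-square $n$.

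For the \lrbm half I would exploit the monotone relationship between the two settings on a common geometric configuration. Fix any labeling of the cylinders in the \urbm instance above; every plan that solves the resulting \lrbm instance is in particular a plan that solves the underlying \urbm instance, using the same number of running buffers at every time step (since an unlabeled solver is free to choose which object goes to which goal, it has at least as many feasible continuations at every moment). Consequently the \mrb of the labeled version is no smaller than the \mrb of the unlabeled version, and the $\Omega(\sqrt{n})$ bound transfers from \urbm to \lrbm at no cost. This inequality goes the ``easy'' direction and does not depend on the internal structure of the dependency graph, so the argument is clean.

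The only mild technical nuisance is extending the result from $n$ of the form $m^2$ to arbitrary $n$. I would handle this by taking $m=\lfloor\sqrt{n}\rfloor$, using the $\D(m,2m)$ construction on $m^2$ cylinders, and parking the leftover $n-m^2 < 2\sqrt{n}+1$ cylinders far away from every used start/goal pose so that they induce no dependencies and can be pick-n-placed directly. Those spectator objects neither increase nor decrease \mrb, while the $m^2$ ``core'' objects still force $\Omega(m)=\Omega(\sqrt{n})$ running buffers. I do not expect any serious obstacle beyond this bookkeeping: the genuinely combinatorial work has already been done inside Lemma~\ref{l:urbm-lower}, and the theorem is essentially a two-line wrap-up on top of it.
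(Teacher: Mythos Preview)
Your proposal is correct and matches the paper's own argument: the paper also derives the theorem immediately from Lemma~\ref{l:urbm-lower} via the observation that any \lrbm plan is a fortiori a \urbm plan on the same start/goal poses, so \mrb for the labeled instance is at least that of the unlabeled one. The extra care you take (explicit realizability of $\D(m,2m)$ by uniform cylinders and the padding trick for non-square $n$) is not spelled out in the paper but is exactly the right kind of bookkeeping; no substantive difference in approach.
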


For uniform cylinders, while the lower bound on \urbm is tight (as shown in
Section~\ref{subsec:upper}), we do not know whether the lower bound on \lrbm is tight;
our conjecture is that $\Omega(\sqrt{n})$ is not a tight lower bound for \lrbm. 
Indeed, the $\Omega(\sqrt{n})$ lower bound can be realized when uniform cylinders are 
simply arranged on a cycle, an illustration of which is given in Fig.~\ref{fig:lrbm-cycle}. 
For a general construction, for each object $o_i$, let $o_i$ depend on $o_{(i-1\mod n)}$ 
and $o_{(i+\sqrt{n}\mod n)}$, where $n$ is the number of objects in the instance. From the 
labeled dependency graph, we can construct the actual \lrbm instance where start and 
goal arrangements both form a cycle. 
We can show that when $n/2$ objects are at the goal poses, $\Omega(\sqrt{n})$ objects are 
at the buffer. We omit the proof, which is similar in spirit to that for Lemma~\ref{l:urbm-lower}. 
\begin{figure}[h!]
    \centering
    \includegraphics[height=1.1in]{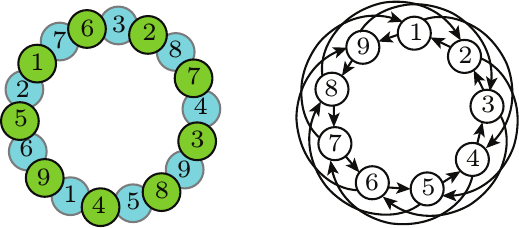}
    \caption{An example of a 9-object \lrbm yielding $\Omega(\sqrt{n})$ \mrb (left)
    and the corresponding dependency graph (right).}
    \label{fig:lrbm-cycle}
\end{figure}

\vspace{-2mm}
\subsection{\mrb Algorithmic Upper Bounds}\label{subsec:upper}
We now establish that regardless of how $n$ uniform cylinders are to be 
rearranged, the corresponding \urbm instance admits a solution using 
$O(\sqrt{n})$ running buffers. The lower and upper bounds on \urbm agree;
therefore, the $O(\sqrt{n})$ bound is tight. 

To prove the upper bound, We propose an $O(n\log(n))$-time algorithm \spp for the 
setting based on a vertex separator of $\udg$.
\spp computes a sequence of goal vertices to be removed from the dependency graph.
Given a sequence of goal vertices to be removed, 
the running buffer size at each moment equals
$$
\max(0,\|N(g,\udg)\|-\|g\|)
$$

where $g$ is the set of removed goal vertices at this moment,
and $N(g,\udg)$ is the set of neighbors of $g$ in $\udg$.
We prove that \spp can find a rearrangement plan 
with $ O(\sqrt{n})$ running buffers.

\begin{algorithm}
\begin{small}
    \SetKwInOut{Input}{Input}
    \SetKwInOut{Output}{Output}
    \SetKwComment{Comment}{\% }{}
    \caption{ \spp}
		\label{alg:spp}
    \SetAlgoLined
		\vspace{0.5mm}
    \Input{$\udg(V,E)$: unlabeled dependency graph}
    \Output{$\pi$: goal sequence}
		\vspace{0.5mm}
		$\pi,V,E\leftarrow$ RemovalTrivialGoals($\udg(V,E)$)\\
        \lIf{$V$ is $\emptyset$}{
        \Return $\pi$
        }
        $A, B, C \leftarrow$ Separator($\udg(V,E)$)\\
        $\pi \leftarrow \pi + g(C)$\\
        $A' \leftarrow A-N(g(C),\udg(V,E))$\\
        $B' \leftarrow A-N(g(C),\udg(V,E))$\\
		$\pi_{A'}\leftarrow $ \spp($\udg(A',E(A'))$)\\
		$\pi_{B'}\leftarrow $ \spp($\udg(B',E(B'))$)\\
		\lIf{$|g(A')|-|s(A')|\geq |g(B')|-|s(B')|$}{
		    $\pi\leftarrow \pi + \pi_{A'} + \pi_{B'}$
		}
		\lElse{
		$\pi\leftarrow \pi + \pi_{B'} + \pi_{A'}$
		}
		\Return $\pi$\\
\end{small}
\end{algorithm}

The algorithm is presented in Algo.~\ref{alg:spp}. 
\spp consumes a graph $\udg(V,E)$, which is a subgraph of $\udg$ induced by vertex set $V$.
To start with, the isolated goal vertices or those with only one dependency in $\udg(V,E)$ can be removed without using buffers (Line 1).
After that, $V$ can be partitioned into three disjointed subsets $A$, $B$ and $C$ \cite{lipton1979separator} (Line 3), 
such that there is no edge connecting vertices in $A$ and $B$, $|A|,|B|\leq 2|V|/3$, and $|C|\leq 2\sqrt{2|V|}$ (Fig.~\ref{fig:separator}(a)). 
For the start vertices in $C$ and the neighbors of the 
goal vertices in $C$, we remove them from $\udg$. 
Since there are at most $5$ neighbors for each goal vertex, there are at most $10\sqrt{2|V|}$ objects moved to the buffer in this operation. 
After that, we remove the goal vertices 
in $C$,
which should be isolated now (Line 4). 
Function $g(\cdot)$ obtains the goal vertices in a given vertex set.
Let $A'$, $B'$ be the remaining vertices in $A$ and $B$ (Line 5-6). 
Function $N(\cdot,\cdot)$ obtains the neighbors of a vertex set in a given dependency subgraph.
With the removal of $C$ from $\udg$, $A'$ and $B'$ form two independent subgraphs (Fig.~\ref{fig:separator}(b)). 
We can deal with the subgraphs one after the other by 
recursively calling \spp(Fig.~\ref{fig:separator}(c)) (Line 7-8). 
Let $\delta(V'):= |g(V')|-|s(V')|$ where $g(V')$ and $s(V')$ are the goal and start vertices in a vertex set $V'$ respectively. 
Between vertex subsets $A'$ and $B'$, we prioritize the one with larger $\delta(\cdot)$ value(Line 9-10).

\begin{figure}[h!]
    \centering
\begin{overpic}
[width=0.45\textwidth]{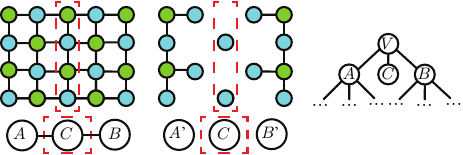}
\put(12,-5){(a)}
\put(46,-5){(b)}
\put(82,-5){(c)}
\end{overpic}
\vspace{4mm}
    \caption{The recursive solver \spp for \urbm. (a) A $O(\sqrt{|V|})$ vertex separator for the planar dependency graph. (b) By removing the start vertices in $C$ and the neighbors of the goal vertices in $C$, the remaining graph consists of two independent subgraphs and isolated goal poses in $C$. (c) The problem can be solved by recursively calling \spp.}
    \label{fig:separator}
\end{figure}

To investigate the running buffer size of the plan computed by \spp,
we construct a binary tree $T$ based on \spp(Fig.~\ref{fig:separator}(c)).
Each node represents a recursive call consuming a subgraph $\udg(V,E)$ and the left and right children of the node are induced from subgraphs $\udg(A',E(A'))$ and $\udg(B',E(B'))$ of $\udg(V,E)$.
\spp computes the plan by visiting the binary tree in a depth-first manner.
For each node on $T$, given the input dependency graph $\udg(V,E)$, 
denote the sequence before we deal with $\udg(V,E)$ as $\pi_0$. 
Denote the vertices pruned in RemovalTrivialGoals( $\udg(V,E)$) as $P$.
Without loss of generality,
assume that \spp recurses into $A'$ before $B'$.
Let $\pi_{P}$, $\pi_{C'}$, $\pi_{A'}$ and $\pi_{B'}$ be the goal removal sequence after we remove vertices in $P$, $C'$, $A'$ and $B'$ from $\udg(V,E)$ respectively.
We define function $RB(\pi)$ to represent ``generalized'' current running buffer size of a removal sequence $\pi$:
When vertices in $\pi$ are removed from the dependency graph, 
if the current running buffer size is positive, 
then $RB(\pi)$ equals the number of running buffers; 
otherwise, $RB(\pi)$ equals the negation of the number of empty goal poses.
In the depth-first recursion, each node on the binary tree $T$ is visited at most three times:

\begin{enumerate}
    \item Before exploring child nodes. The peak may be reached during the removal of $C'$. Let $\pi_{C^*}$ be the sequence when the running buffer size reaches the peak. 
    $RB(\pi_{C^*}) \leq RB(\pi_0) + 10\sqrt{2} \sqrt{|V|}$.
    \item After we deal with $A'$ and before we deal with $B'$.
    $RB(\pi_{A'}) = RB(\pi_{0}) + (s(C')-g(C')) + (s(A')-g(A'))$.
    \item After we deal with $B'$.
    $RB(\pi_{B'}) = RB(\pi_0) + (s(V)-g(V))$.
\end{enumerate}

\begin{lemma}\label{lemma:average}
$RB(\pi_{A'})\leq (RB(\pi_{C^*})+RB(\pi_{B'}))/2$.
\end{lemma}

\begin{proof}
\begin{equation*}
    \begin{split}
        & RB(\pi_{A'}) \\
        & = RB(\pi_{0}) + (s(P)-g(P)) + (s(C')-g(C'))\\
        &\ \ \ \ + (s(A')-g(A'))\\ 
        &= RB(\pi_0) + (s(P)-g(P)) + (s(C')-g(C')) \\
        &\ \ \ \ + \min[s(A')-g(A'), s(B')-g(B')]\\
        &\leq RB(\pi_0) + (s(P)-g(P)) + (s(C')-g(C'))\\ 
        &\ \ \ \  +\dfrac{1}{2}[(s(V)-g(V)) - (s(P)-g(P))\\
        &\ \ \ \ - (s(C')-g(C'))]\\
        &= \dfrac{1}{2} \{[RB(\pi_0) + s(V)-g(V)]\\
        &\ \ \ \ +[RB(\pi_0) + (s(P)-g(P)) + (s(C')-g(C'))]\}\\
        &= \dfrac{1}{2} [RB(\pi_{B'})+RB(\pi_{C'})]\\
        &\leq \dfrac{1}{2} [RB(\pi_{B'})+RB(\pi_{C^*})]
    \end{split}
\end{equation*}
\end{proof}

Lemma~\ref{lemma:average} establishes the relationship among $\pi_{C^*}$, $\pi_{A'}$, and $\pi_{B'}$ of a node on $T$.
With this lemma, we obtain an upper bound for each node:


\begin{lemma}\label{lemma:induction}
Given a node $N$ with depth $d$ in the binary tree $T$, let $\pi_{C^*}(N)$, $\pi_{A'}(N)$, and $\pi_{B'}(N)$ be $\pi_{C^*}$, $\pi_{A'}$, and $\pi_{B'}$ of $N$ respectively. 
$RB(\pi_{C^*}(N))$, $RB(\pi_{A'}(N))$, and $RB(\pi_{B'}(N))$ are all upper bounded by 
\begin{equation}
    \dfrac{[1-(\sqrt{\dfrac{2}{3}})^{d+1}]}{1-\sqrt{\dfrac{2}{3}}}20\sqrt{n}
\end{equation}
where $n$ is the number of objects in the instance. 
\end{lemma}

\begin{proof}
The conclusion can be proven by induction.\\

When $d=0$, the dependency graph at the root node $r$ has $n$ start vertices and $n$ goal vertices. 
We have $RB(\pi_{C^*})\leq 20\sqrt{n}, RB(\pi_{B'})=0$. 
According to Lemma \ref{lemma:average}, 
$RB(\pi_{A'})\leq 10 \sqrt{n}$.
The conclusion holds.

Assume that the conclusion holds for all the nodes with depth less than or equal to $k$.
Given an arbitrary node $N$ in the depth $k$, 
let the left and right children of $N$ be $L$ and $R$, which are the nodes with depth $k+1$. 
The corresponding dependency graphs have at most $(2/3)^{k+1}\cdot 2n$ vertices respectively. 
$$RB(\pi_{C^*}(L))\leq RB(\pi_{C^*}(N)) + \sqrt{(2/3)^{k+1}\cdot 2n}\cdot 10\sqrt{2}$$
$$RB(\pi_{C^*}(R))\leq RB(\pi_{A'}(N)) + \sqrt{(2/3)^{k+1}\cdot 2n}\cdot 10\sqrt{2}$$
Since $\pi_{B'}(L)=\pi_{A'}(N)$ and $\pi_{B'}(R)=\pi_{B'}(N)$, upper bound for nodes with depth $k$ holds for $\pi_{B'}(L)$ and $\pi_{B'}(R)$. Therefore, the running buffer size for the depth $k+1$ nodes have an upper bound
\begin{equation}
    \begin{split}
        & \dfrac{[1-(\sqrt{2/3})^{k+1}]}{1-\sqrt{2/3}}20\sqrt{n} + \sqrt{(2/3)^{k+1}\cdot 2n}\cdot 10\sqrt{2}\\ 
        = &\dfrac{[1-(\sqrt{2/3})^{k+2}]}{1-\sqrt{2/3}}20\sqrt{n}        
    \end{split}
\end{equation}

Therefore, the upper bound holds for all the nodes of depth $k+1$. 
With induction, the lemma holds.
\end{proof}

With Lemma~\ref{lemma:induction}, we have \mrb is bounded by $\dfrac{20}{1-\sqrt{2/3}}\sqrt{n}$.

\begin{theorem}\label{t:urbm-upper}
For \urbm with $n$ uniform cylinders, a polynomial time algorithm can compute a
plan with $O(\sqrt{n})$ \rb, which implies that \mrb is  bounded by $O(\sqrt{n})$.
\end{theorem}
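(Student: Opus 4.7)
The plan is to exploit the planar bipartite structure of the unlabeled dependency graph (Proposition~\ref{p:udg}) via the Lipton--Tarjan planar separator theorem and then argue recursively. Specifically, for any planar graph on $N$ vertices there is a vertex separator $S$ of size $O(\sqrt{N})$ whose removal leaves two disjoint parts $A$ and $B$ with $|A|,|B| \le 2N/3$, and such a separator can be computed in linear time. I would apply this to the unlabeled dependency graph $\udgg$ on $2n$ vertices (the start and goal vertices combined).

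Given the separator decomposition $V(\udgg) = A \cup S \cup B$, I would build the pick-n-place plan in three phases. In the first phase, recursively rearrange the sub-instance induced on $A$; every dependency edge leaving $A$ goes into $S$, so the only objects that might need to stay in a buffer \emph{across} the two phases are those whose start/goal vertex lies in $S$ (at most $|S| = O(\sqrt{n})$ of them). In the second phase, process the separator vertices, clearing any lingering objects associated with $S$. In the third phase, recursively rearrange the sub-instance on $B$. Crucially, because objects are unlabeled and the cylinders are uniform, any picked-up object can be placed at any cleared goal of the right partite class: this is what allows us to treat the two sub-instances essentially independently, with only the $O(\sqrt{n})$ separator objects crossing between phases. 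Inside a recursive call on a part of size $N'$, the extra running-buffer cost incurred by the call's own separator is $O(\sqrt{N'})$, which is added on top of the at most $O(\sqrt{n})$ objects from enclosing levels.

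Let $T(N)$ denote the maximum number of running buffers needed by this scheme on an instance of size $N$. Summing the separator costs along any root-to-leaf path in the recursion tree gives
\begin{equation*}
T(n) \;\le\; T(2n/3) + c\sqrt{n} \;=\; O(\sqrt{n}),
\end{equation*}
since the geometric series $\sqrt{n} + \sqrt{2n/3} + \sqrt{(2/3)^2 n} + \cdots$ converges to $O(\sqrt{n})$. Combined with the matching $\Omega(\sqrt{n})$ lower bound from Lemma~\ref{l:urbm-lower}, this establishes that the bound is tight. Polynomial running time follows because planar separators are computable in linear time and the recursion has depth $O(\log n)$, giving an overall $O(n \log n)$ planning algorithm.

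The main obstacle I anticipate is formalizing the claim that the separator truly accounts for \emph{all} cross-phase buffer usage in the unlabeled setting, since start and goal vertices interact through the bipartite edges in subtle ways: one must carefully argue that when finishing phase one, every object in $A$ whose goal lies in $B$ can either be routed to a cleared goal in $A \cup S$ of the same partite class (using unlabeledness), or counted against the separator budget. Handling this matching-style argument cleanly, together with the base case where the sub-instance is small enough that $O(\sqrt{N'})$ dominates any overhead, is the delicate part of the proof; the rest is a standard separator-based recursion.
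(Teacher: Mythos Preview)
Your high-level approach matches the paper's exactly: apply the Lipton--Tarjan separator to the planar unlabeled dependency graph and recurse, yielding $T(N) \le T(2N/3) + O(\sqrt{N}) = O(\sqrt{N})$. However, two technical ingredients that the paper supplies are missing from your sketch, and your ``main obstacle'' paragraph misidentifies what the real difficulty is.

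First, removing only the separator $S$ does not fully decouple $A$ and $B$: a goal vertex in $S$ can have start-vertex neighbors in both parts, so those starts still block that goal. The paper handles this by invoking the degree-$5$ bound of Proposition~\ref{p:udg}: it moves to buffer not only the start vertices in $S$ but also every neighbor of each goal vertex in $S$ (at most five per goal), so those goals become isolated and can be filled immediately. This costs at most $O(\sqrt{N})$ buffer slots and leaves the induced subgraphs on $A$ and $B$ genuinely independent. Second, and this is the actual obstacle you are circling, the sub-instance on $A$ need not be balanced: it may contain more start vertices than goal vertices, so after finishing $A$ you are forced to leave a surplus of objects in the buffer that has nothing to do with $S$. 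Your phrasing ``every object in $A$ whose goal lies in $B$'' is not well-posed in the unlabeled setting, since objects have no assigned goals; the issue is purely the start/goal count imbalance within each part. The paper resolves this by always recursing first on the part with larger $\delta(\cdot) = g(\cdot) - s(\cdot)$, guaranteeing the first recursive call does not increase the buffer beyond the $O(\sqrt{N})$ already incurred. With these two refinements your argument goes through as in the paper.
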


\section{Fast Algorithms for \lrbm and \urbm}\label{sec:algorithms}
In this section, we first describe a dynamic programming-based method for 
\lrbm (Sec.~\ref{subsec:dp}). 
Then, we propose a priority queue based method in Sec.~\ref{subsec:pqs} for \urbm. 
Finally, a significantly faster 
depth-first modification of DP for computing \mrb is provided in 
Sec.~\ref{subsec:dfsdp}. We mention that, we also developed an integer
programming model, denoted $\ilpmrb$, for computing the minimum total number 
of buffers needed subject to the \mrb constraint, which is compared 
with the algorithm that computes the minimum total buffer without 
the \mrb constraint, denoted as $\ilpfvs$, from \cite{han2018complexity}.
A brief description of $\ilpmrb$ is given in Sec.~\ref{subsec:ilp}.

\vspace{-1mm}
\subsection{Dynamic Programming (DP) for \lrbm}\label{subsec:dp}
As mentioned in Sec.~\ref{subsec:logvrb}, a rearrangement plan in \lrbm can be represented
as a linear ordering of object labels where objects will be moved out of the start pose
based on the order. That is, given an ordering of objects, $\pi$, we start with $o_{\pi(1)}$. 
If $x^g_{\pi(1)}$ is not occupied, then $o_{\pi(1)}$ is directly moved there. Otherwise, 
it is moved to buffer location. We then continue with the second object in the order, and 
so on. After we work with each object in the given order, we always check whether objects in 
buffer can be moved to their goals, and do so if an opportunity presents. 
We now describe a \emph{dynamic programming} (DP) algorithm for computing an ordering that 
yields the \mrb. 




The pseudo-code of the algorithm is given in Algo.~\ref{alg:dp}. The algorithm maintains 
a search tree $T$, each node of which represents an arrangement where a set of objects
$S$ have left the corresponding start poses. We record the objects currently at the 
buffer ($T[S]$.b) and the minimum running buffer from the start arrangement 
$\mathcal{A}_1$ to the current arrangement ($T[S].\mrb$). The DP starts with an empty 
$T$. We let the root node represent $\mathcal A_1$ (line 1). At this moment, there is
no object in the buffer and the \mrb is 0(line 2-3). And then we enumerate all the arrangements with $|S|=$ 1, 2, $\cdots$ and finally $n$(line 4-5). For arbitrary $S$, 
the objects at the buffer are the objects in $S$ whose goal poses are still occupied by 
other objects(line 6), i.e., $\{o\in S | \exists o' \in \mathcal{O} \backslash S, 
(o,o')\in A\}$, where $A$ is the set of arcs in $\ldg$. $T[S].\mrb$, the minimum running
buffer from the root node $T[\emptyset]$ to $T[S]$, depends on the last object $o_i$ 
added into $S$ and can be computed by enumerating $o_i$ (line 7-20):
\vspace{2mm}
$$
\begin{array}{l}
    T[S].\mrb = \displaystyle\min_{o_i\in S}\max( T[S\backslash \{o_i\}].\mrb, \ |T[S].b|, \\ 
    \qquad\qquad\qquad\qquad\qquad |T[S\backslash \{o_i\}].b|\ +\text{TC}(S\backslash \{o_i\},S)),
\end{array}
\vspace{2mm}
$$
where the \emph{transition cost} TC is given as 
\vspace{1mm}
$$
\text{TC}(S\backslash \{o_i\},S)=\begin{cases}1,\  o_i \in T[S].b,\\ 
0,\  otherwise,\end{cases}
\vspace{1mm}
$$
with $x^g_i$ currently occupied (line 10), the transition cost is due to objects 
dependent on $o_i$ cannot be moved out of the buffer before moving $o_i$ to the 
buffer(line 11). If $T[S].\mrb$ is minimized with $o_i$ being the last object in 
$S$ from the starts, then $T[S\backslash \{o_i\}]$ is the parent node of $T[S]$ 
in $T$ (line 14-16). Once $T[\mathcal O]$ is added into $T$, $T[\mathcal{O}].\mrb$ 
is the \mrb of the instance (line 17) and the path in $T$ from $T[\emptyset]$ to
$T[\mathcal{O}]$ is the corresponding solution to the instance.

\begin{algorithm}
\begin{small}
    \SetKwInOut{Input}{Input}
    \SetKwInOut{Output}{Output}
    \SetKwComment{Comment}{\% }{}
    \caption{ Dynamic Programming}
		\label{alg:dp}
    \SetAlgoLined
		\vspace{0.5mm}
    \Input{$G_{\mathcal{A}_1, \mathcal{A}_2}(\mathcal{O},A)$: labeled dependency graph}
    \Output{$\mrb$: the minimum number of running buffers}
		\vspace{0.5mm}
		$T.root \leftarrow \emptyset$ \\
		\vspace{0.5mm}
		$T[\emptyset].b \leftarrow\emptyset$ \Comment{{\footnotesize objects currently at the buffer}}
		\vspace{0.5mm}
		$T[\emptyset].\mrb \leftarrow 0$ \Comment{{\footnotesize current minimum running buffer}}
		\vspace{0.5mm}
		\For{$1\leq k\leq |\mathcal{O}|$}{
		\Comment{\footnotesize{enumerate cases where $k$ objects have left the start poses}}
		\For{$S\in \text{k-combinations of } \mathcal{O}$}{
		$T[S].b\leftarrow \{o\in S \mid \exists o' \in \mathcal{O} \backslash S, (o,o')\in A\}$\\
		\Comment{{\footnotesize Find the \mrb from $T[\emptyset]$ to $T[S]$}}
		$T[S].\mrb  \leftarrow \infty$\\
		\For{$o_i \in S$}{
		$parent = S\backslash \{o_i\}$\\
		\If{$o_i\in T[S].b$}{
		$RB \leftarrow \max(T[parent].\mrb, |T[S].b|$, $|T[parent].b|+1$)
		}
		\Else{
		$RB \leftarrow \max(T[parent].\mrb, |T[S].b|$)
		}
		\If{RB$<T[S].\mrb$}{
		$T[S].\mrb \leftarrow RB$\\
		$T[S].parent \leftarrow$ parent
		}
		}
		}
		}
		\vspace{0.5mm}
		\Return $T[\mathcal{O}]$.\mrb\\
\end{small}
\end{algorithm}

For the \lrbm instance in Fig.~\ref{fig:ex-prob}, Tab.~\ref{Tab:DP} shows $T[S].\mrb$ 
with different last-object options when $S=\{o_2, o_5, o_6\}$. If the last object $o_i$ is 
$o_5$, then we need to move $o_5$ into the buffer before moving $o_6$ out of the buffer. Therefore, 
even though buffer sizes of the parent node and the current node are both 2, there is a 
moment when all of the three objects are at the buffer. However, when we choose $o_2$ or 
$o_6$ as the last object to add, the $T[S].\mrb$ becomes 2.
\begin{center}
\begin{table}[h!]
    \caption{\label{Tab:DP}$T[S].\mrb$ for different last objects ($[p]= [parent]$)}
    \begin{tabular}{p{4.6em}|p{4.6em}|p{4.6em}|p{4.6em}|p{4.6em}}
        \hline
        Last object & $T[p].\mrb$ & $T[p].b$ & $T[S].b$ & $T[S].\mrb$\\
        \hline
        $o_2$ & 1 & \{$o_5$\} & \{$o_2$, $o_5$\} & 2\\
        $o_5$ & 2 & \{$o_2$, $o_6$\} & \{$o_2$, $o_5$\} & 3\\
        $o_6$ & 2 & \{$o_2$, $o_5$\} & \{$o_2$, $o_5$\} & 2\\
    \end{tabular}
\end{table}
\vspace{-3mm}
\end{center}

\subsection{A Priority Queue based method for \urbm}\label{subsec:pqs}
Similar to \lrbm, rearrangement plans in \urbm can be represented by a linear 
ordering of goal vertices in $\udg$. We can compute the ordering that yields 
\mrb by maintaining a search tree like in Algo.~\ref{alg:dp}. Each node $T[V]$ 
in the tree represents an arrangement where a set of goal vertices $V$ have 
been filled. The remaining dependencies of $T[V]$ is an induced graph of $\udg$, 
formed from $V(\udg)\backslash (V\cup N(V))$ where $N(V)$ is the neighbors of 
$V$ in $\udg$. The running buffer size of $T[V]$ is $|N(V)|-|V|$. Given an 
induced graph $I(V)$, denote the goal vertices with no more than one neighbor 
in $I(V)$ as \emph{free goals}. We make two observations. First, given an 
induced graph $I(V)$, we can always prioritize the free goals in terms of 
the order to fill without optimality loss.
Second, multiple free goals may be generated as a goal vertex is filled. For example, 
in the instance shown in Fig.~\ref{fig:ex-prob}(a), when the goal representing 
$c^5_g$ is filled, $c^2_g$, $c^3_g$, and $c^4_g$ become free goals and can be added to 
the linear ordering in an arbitrary order. In conclusion, the necessary nodes 
(nodes without free goals in the induced graph) in the search tree are sparse 
and enumerating nodes with DP carries much overhead. 

As such, instead of exploring the search tree layer by layer like DP, we maintain 
a sparse tree with a priority queue $Q$. While each node still represents an 
arrangement, each edge in the tree represents either an action moving an object 
to the buffer or multiple actions filling free goal poses. We always 
pop out and develop the node with the smallest \mrb in $Q$. If a child node of 
the one that we develop already exist in the tree but is with smaller \mrb than 
previously claimed, we will update the parent of the child node into the node we 
are developing. The \mrb of the node representing $\mathcal A_2$ sets an upper bound 
of the solution and nodes in $Q$ with larger \mrb will be pruned away. The algorithm 
terminates when $Q$ is empty.
We denote this priority queue-based search method \PQS.

\vspace{-1mm}
\subsection{Dynamic Programming with Depth-First Exploration}\label{subsec:dfsdp}
Both \lrbm and \urbm can be viewed as solving a series of decision problems, i.e., 
asking whether we can find a rearrangement plan with $k= 1, 2, \ldots$ running buffers. 
As dynamic programming is applied to solve such decision problems, instead of 
performing the more standard breadth first exploration of the search tree, we 
identified that a depth-first exploration is much more effective. 
We call this variation of dynamic programming, which is a fairly 
straightforward alteration of a standard DP procedure. 
%
Essentially, \DFSDP fixes a $k$ and checks whether there is a plan requiring 
no more than $k$ running buffers. As the search tree (see Sec.~\ref{subsec:dp}) 
is explored, depth-first exploration is used instead of breadth-first. The 
intuition is that, when there are many rearrangement plans on the search tree 
that do not use more than $k$ running buffers, depth-first search will quickly 
find such a solution, whereas a standard DP must grow the full search tree 
before returning a feasible solution. 
A similar depth-first exploration heuristic is used in \cite{wang2021uniform}. 

\subsection{Minimizing Total Buffers Subject to \mrb Constraints}\label{subsec:ilp}
Let binary variables $c_{i,j}$ represent $\ldg$: $c_{i,j}=1$ if and only if $(i,j)$ is in the arc set of $\ldg$. Let $y_{i,j}(1\leq i<j\leq n)$ be the binary 
sequential variables: $y_{i,j}=1$ if and only if $o_i$ moves out of the start pose before $o_j$. 
%
We further introduce two sets of binary variables $g_{i,j}$ and $b_{i,j}(1\leq i,j\leq n)$ to indicate object positions at each moment. 
$g_{i,j}=1$ indicates that $o_j$ has no dependency on other objects when moving $o_i$ from the start pose. In other words, the goal pose of $o_j$ is available at the moment. $b_{i,j}=1$ indicates that $o_j$ stays at the buffer after moving $o_i$ away from the start pose.
Finally, binary variables $B_i=1$ if and only if $o_i$ is moved to a buffer at some point.
The objective function consists of two terms: 
the total buffer term and running buffer term.
The total buffer term, scaled by $\alpha$, counts the number of objects that need buffer locations.
The running buffer is represented with an integer variable $K$ and scaled by $\beta$.
To minimize total buffers subject to \mrb constraints,
we set $\alpha=1,\beta=n$.
The objective function is adaptive for different demands on rearrangement plans. 
Specifically, when $\alpha=0$,$\beta > 0$,
the MIP model minimizes \mrb.
When $\alpha > 0$, $\beta =0$, the MIP model minimizes total buffers, 
i.e. total actions in the rearrangement plan.
When $\alpha/\beta >n-1$, the MIP model first minimizes total buffers, 
and then minimizes running buffers.
When $\beta/\alpha >n-1$, the MIP model first minimizes running buffers, 
and then minimizes total buffers.

In the MIP model, Constraints~\ref{eq:c1} imply the rules for sequential variables.
Constraints~\ref{eq:c2} imply that $B_j=1$ if $o_j$ has been at buffers in the plan.
Constraints~\ref{eq:c3} imply that the running buffer $K$ is lower bounded by the maximum number of objects concurrently placed in buffers.
With Constraints~\ref{eq:c4} and \ref{eq:c5}, $g_{i,j}=0$ if and only if $o_j$ depends on an object $o_k$ which is still at the start pose when $o_i$ is moved.
With Constraints~\ref{eq:c6}-\ref{eq:c8}, 
$b_{i,j}=1$ if and only if $o_j$ is moved before $o_i$ and the goal pose is still unavailable when $o_i$ is moved from the start pose.

\begin{equation}
    \arg \min \alpha [\sum_{i=1}^{n}B_i]+\beta K
\end{equation}
\begin{equation}\label{eq:c1}
    0\leq y_{i,j}+y_{j,k}-y_{i,k}\leq 1 \ \ \  \forall 1\leq i < j < k \leq n
\end{equation}
\begin{equation}\label{eq:c2}
    B_j \geq \sum_{1\leq i\leq n} \dfrac{b_{i,j}}{n}\ \ \  \forall 1\leq j\leq n
\end{equation}
\begin{equation}\label{eq:c3}
    K \geq \sum_{1\leq j\leq n} b_{i,j}\ \ \  \forall 1\leq i\leq n
\end{equation}
\begin{equation}\label{eq:c4}
    \begin{split}
    \sum_{1\leq k < i} \dfrac{c_{j,k}(1-y_{k,i})}{n} + \sum_{i < k \leq n} \dfrac{c_{j,k}y_{i,k}}{n} \leq 1-g_{i,j} \\
    \forall 1\leq i,j \leq n
    \end{split}
\end{equation}
\begin{equation}\label{eq:c5}
    \begin{split}
    1-g_{i,j}\leq \sum_{1\leq k < i} c_{j,k}(1-y_{k,i}) + \sum_{i < k \leq n} c_{j,k}y_{i,k}\\ 
    \forall 1\leq i,j \leq n
    \end{split}
\end{equation}
\begin{equation}\label{eq:c6}
\begin{split}
    \dfrac{g_{i,j}+y_{i,j}}{2}\leq 1-b_{i,j} \leq g_{i,j}+y_{i,j}\\
    \forall 1\leq i<j \leq n
\end{split}
\end{equation}
\begin{equation}\label{eq:c7}
\begin{split}
    \dfrac{g_{j,i}+(1-y_{i,j})}{2}\leq 1-b_{j,i} \leq g_{j,i}+(1-y_{i,j})\\
    \forall 1\leq i<j \leq n
\end{split}
\end{equation}
\begin{equation}\label{eq:c8}
b_{i,i}=1-g_{i,i}
\end{equation}

\section{Experimental Studies}\label{sec:evaluation}
Our evaluation focuses on uniform cylinders, given their prevalence in 
practical applications. 
For simulation studies, instances with different object densities are created, as measured 
by \emph{density level} $\rho := n\pi r^2/(h*w)$, where $n$ is the number of objects and 
$r$ is the base radius. $h$ and $w$ are the height and width of the workspace.
In other words, $\rho$ is the proportion of the tabletop surface occupied by objects. 

The evaluation is conducted on both random object placements and manually 
constructed difficult setups (e.g., dependency grids with $MRB = \Omega(\sqrt{n}$)).
For generating test cases with high $\rho$ value, we invented a physic engine 
(we used Gazebo) based approach for doing so. Within a rectangular box, we sample 
placements of cylinders at lower density and then also sample locations for some 
smaller ``filler'' objects (see Fig.~\ref{fig:compression}, left). From here, 
one side of the box is pushed to reach a high density setting 
(Fig.~\ref{fig:compression}, right), which is very difficult to generate via random 
sampling. By controlling the ratio of the two types of objects, different density 
levels can be readily obtained. Fig.~\ref{fig:density} shows three random object placements 
for $\rho = 0.2, 0.4$ and $0.6$.

\begin{figure}[h!]
    \vspace{1mm}
    \centering
    \includegraphics[width=0.8\columnwidth]{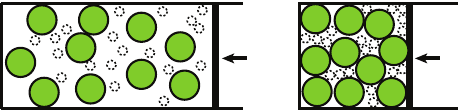}
    \vspace{1mm}
    \caption{Generating dense instances using a physics-engine based simulator
    through     compression of the left scene to the right scene.}
    \label{fig:compression}
\end{figure}

\begin{figure}[h!]
    \vspace{-1mm}
    \centering
    \includegraphics[width=0.75\columnwidth]{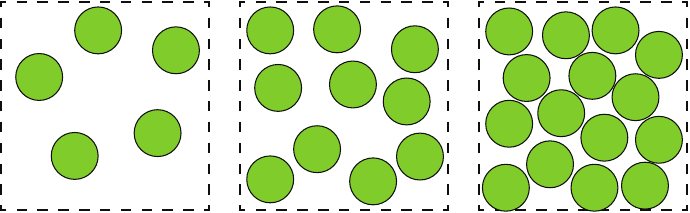}
    \vspace{1mm}
    \caption{Unlabeled arrangements with $\rho=0.2, 0.4, 0.6$ respectively.}
    \label{fig:density}
    \vspace{-1mm}
\end{figure}

From two randomly generated object placements with same $\rho$ and $n$ values, a 
\urbm instance can be readily created by superimposing one over the other. \lrbm
instances can be generated from \urbm instances by assigning each object a random 
label in $[n]$ for both start and goal configurations.

%
%

The proposed algorithms are implemented in Python and all experiments are executed 
on an Intel$^\circledR$ Xeon$^\circledR$ CPU at 3.00GHz. For solving ILP, Gurobi 9.16.0 \cite{gurobi} is used.

\vspace{-1mm}
\subsection{Labeled Rearrangement over Random Instances }
In Fig.~\ref{fig:LabeledAlgorithms}, we compare the effectiveness of the DP and \DFSDP, in terms of computation time and success rate, for different densities. 
Each data point is the average of $30$ test cases minus the unfinished ones, if any, 
subject to a time limit of $300$ seconds per test case. For \lrbm, we are able to 
push to $\rho = 0.4$, which is fairly dense. 
The results clearly demonstrate that \DFSDP significantly outperform the baseline
DP.
Based on the evaluation, both methods can be used to tackle practical sized problems 
(e.g., tens of objects), with \DFSDP demonstrating superior efficiency and robustness. 
\begin{figure}[h!]
\vspace{1.5mm}
    \centering
    \begin{overpic}[width=1\columnwidth]{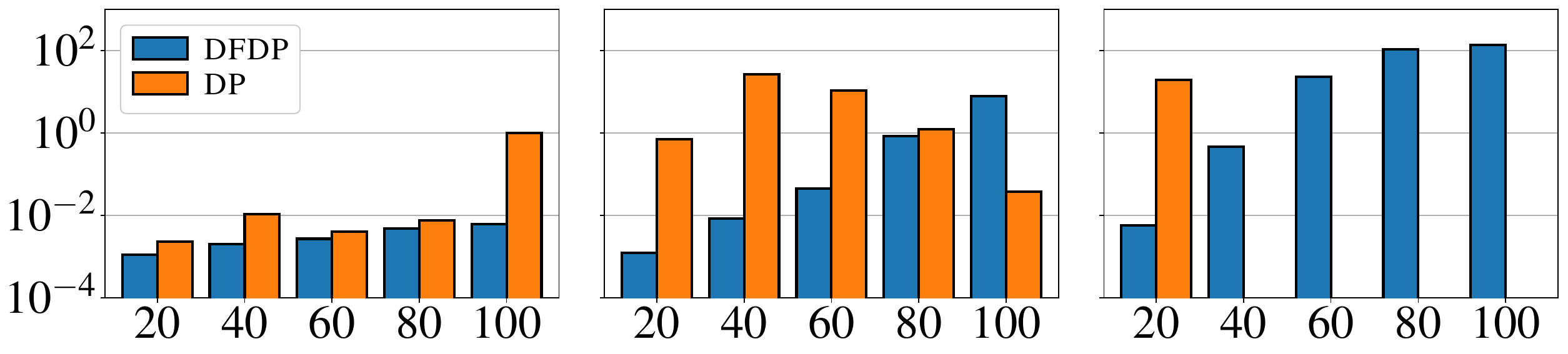}
    \put(0.5,-22.5){ \includegraphics[width=0.982\columnwidth]{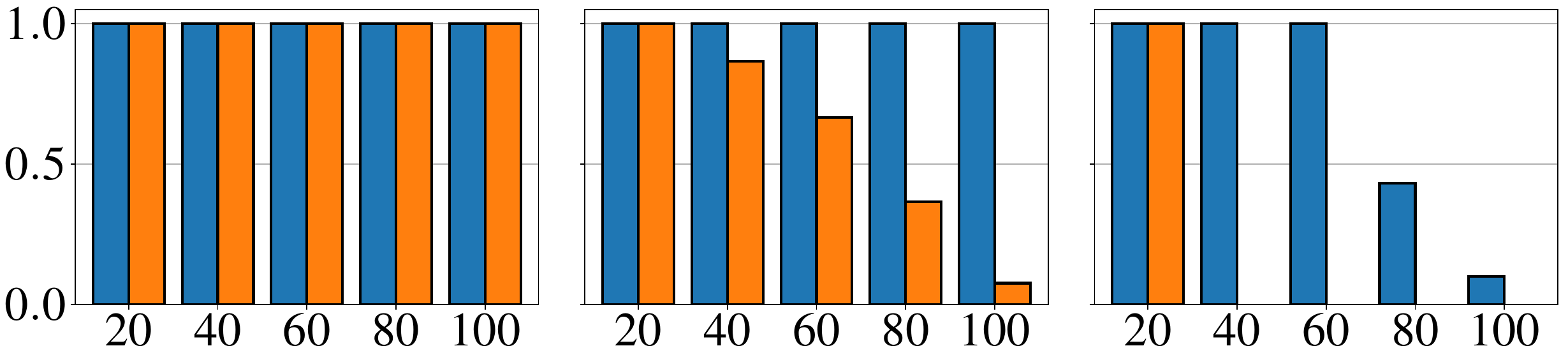}}
    \end{overpic}
    \vspace{16mm}
    \caption{Performance of \DFSDP and DP over \lrbm. The top row shows the average
    computation time (s) and the bottom row the success rate, for density levels
    $\rho=0.2, 0.3$, $0.4$, from left to right. The $x$-axis denotes the number of 
    objects involved in a test case.}
    \label{fig:LabeledAlgorithms}
\end{figure}

The actual \mrb sizes for the same test cases from Fig.~\ref{fig:LabeledAlgorithms} 
are shown in Fig.~\ref{fig:LabeledResults}
on the left. We observe that \mrb is rarely very large even for fairly large \lrbm
instances. The size of \mrb appears correlated to the size of the largest connected
component of the underlying dependency graph, shown in Fig.~\ref{fig:LabeledResults}
on the right.
\begin{figure}[h!]
\vspace{1mm}
    \centering
\begin{overpic}
[width=0.23\textwidth]{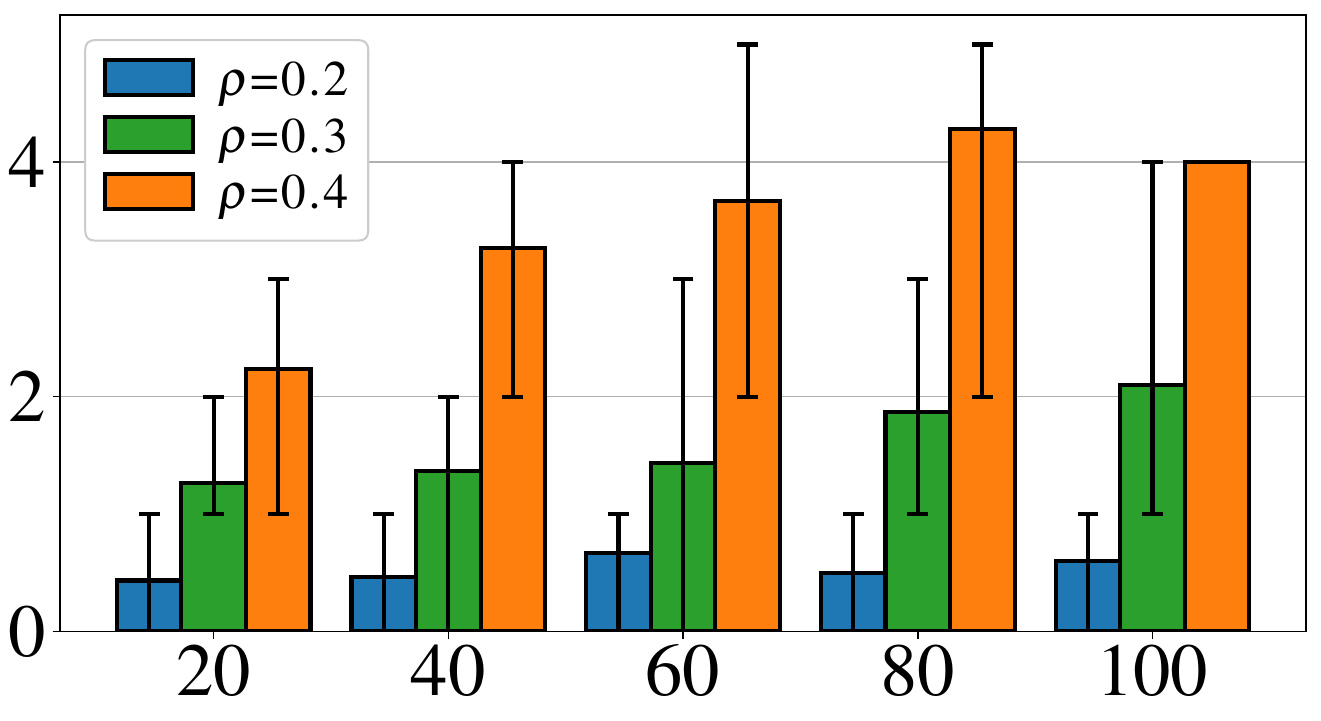}
\end{overpic}
\begin{overpic}
[width=0.23\textwidth]{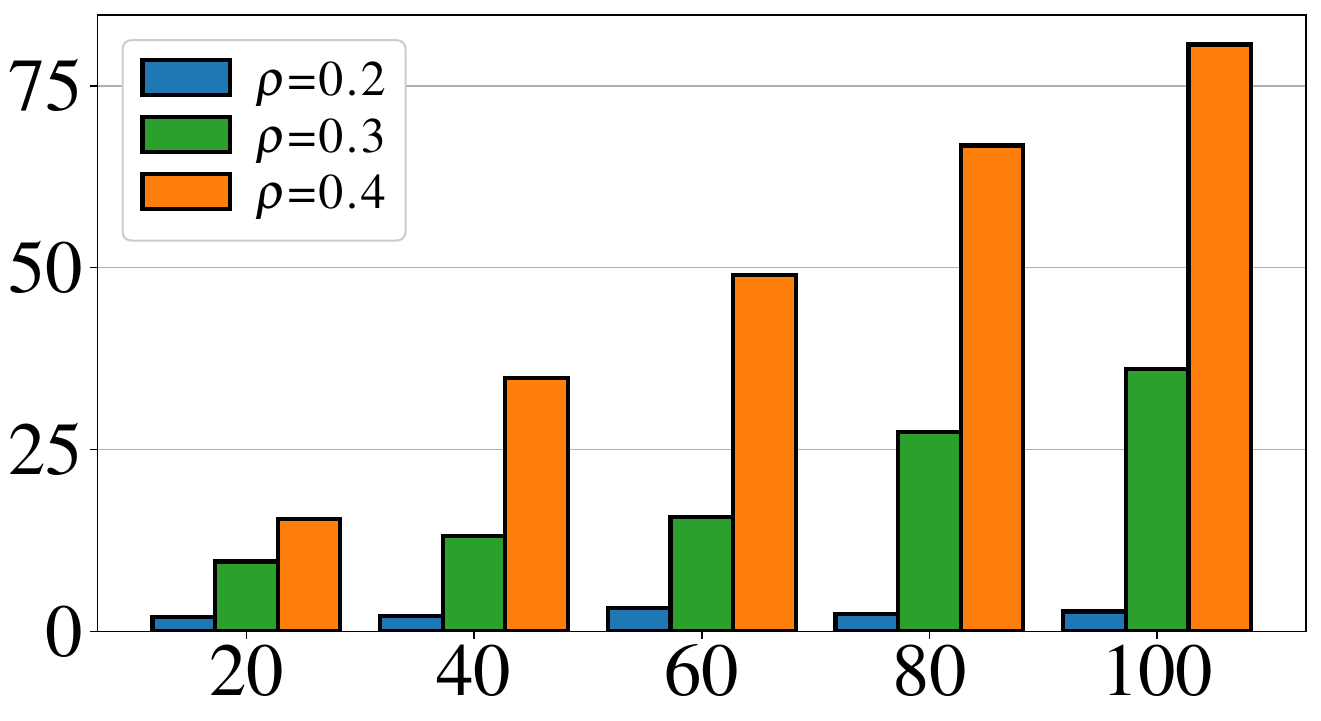}
\end{overpic}
    \caption{For \lrbm instances with $\rho = 0.2$-$0.4$ and $n=20$-$100$, the 
    left figure shows average \mrb size and range. The right figure shows
    the size of the largest connected component of the dependency graph.}
    \label{fig:LabeledResults}
        \vspace{-1mm}
\end{figure}

For $\lrbm$ with $\rho = 0.3$ and $n$ up to $50$, we computed the \fvs sizes 
using $\ilpfvs$ (which does not scale to higher $\rho$ and $n$) and compared that 
with the \mrb sizes, as shown in Fig.~\ref{fig:MRBFVS} (a). We observe that the
\fvs is about twice as large as \mrb, suggesting that \mrb provides more reliable 
information for estimating the design parameters of pick-n-place systems. For 
these instances, we also computed the total number of buffers needed subject to 
the \mrb constraint using $\ilptb$. Out of about $150$ instances, only $1$ showed 
a difference as compared with \fvs (therefore, this information is not shown in 
the figure). In Fig.~\ref{fig:MRBFVS} (b), we provided computation time 
comparison between $\ilpfvs$ and $\ilptb$, showing that $\ilptb$ is practical,
if it is desirable to minimize the total buffers after guaranteeing the 
minimum number of running buffers. 

\begin{figure}[h!]
\vspace{1mm}
    \centering
    \begin{overpic}[width=0.33\textwidth]{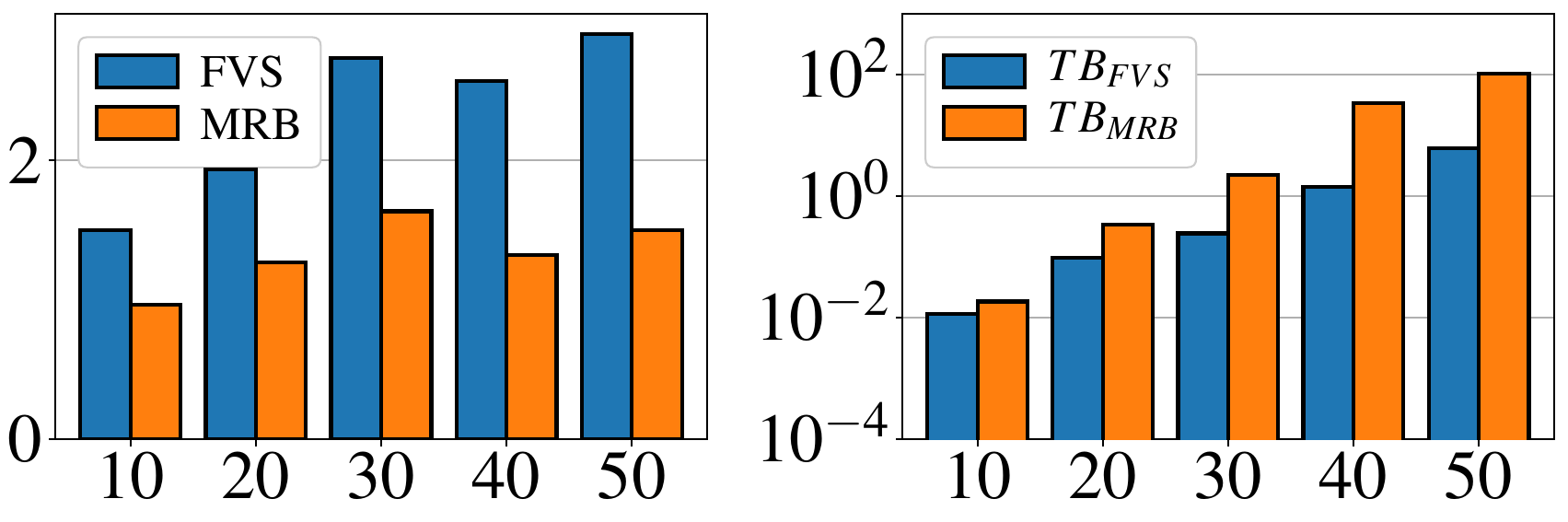}
    \put(22,-3.5){{\small (a)}}
    \put(76,-3.5){{\small (b)}}
    \end{overpic}
    \vspace{2mm}
    \caption{(a) Comparison between size of \mrb and \fvs. (b)  Computation time 
    comparison between $\ilpfvs$ and $\ilptb$. }
    \label{fig:MRBFVS}
    \vspace{-1mm}
\end{figure}

Considering our theoretical findings and the evaluation results, an important 
conclusion can be drawn here is that \mrb is effectively a small constant for
random instances, even when the instances are very large. Also, 
minimizing the total number of buffers used subject to \mrb constraint can 
be done quickly for practical sized problems. 

\subsection{Unlabeled Rearrangement over Random Instances}
For \urbm, we carry out similar performance evaluation as we have done for \lrbm. 
Here, \PQS and \DFSDP are compared. For each combination of $\rho$ and $n$, $100$ 
random test cases are evaluated. Notably, we can reach $\rho = 0.6$ with relative
ease. From Fig.~\ref{fig:UnlabeledAlgorithms}, we observe that \DFSDP is more 
efficient than \PQS, especially for large-scale dense settings. In terms of the \mrb 
size, all instances tested has an average \mrb size between $0$ and $0.7$, which is 
fairly small (Fig.~\ref{fig:UnlabeledResults}). Interestingly, we witness a decrease of 
\mrb as the number of objects increases, which could be due to the lessening 
``border effect'' of the larger instances. That is, for instances with fewer 
objects, the bounding square puts more restriction on the placement of the 
objects inside. For larger instances, such restricting effects become smaller.
We mention that the total number of buffers for random \urbm cases subject to 
\mrb constraints are generally very small. 

\begin{figure}[h!]
    \vspace{2mm}
    \centering
    \begin{overpic}[width=1\columnwidth]{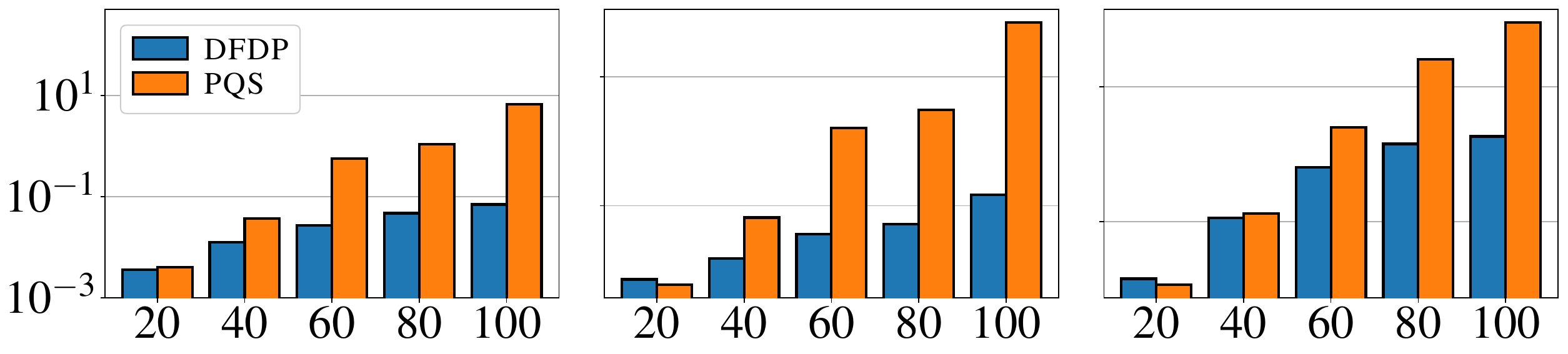}
    \put(0.5,-22.5){ \includegraphics[width=0.982\columnwidth]{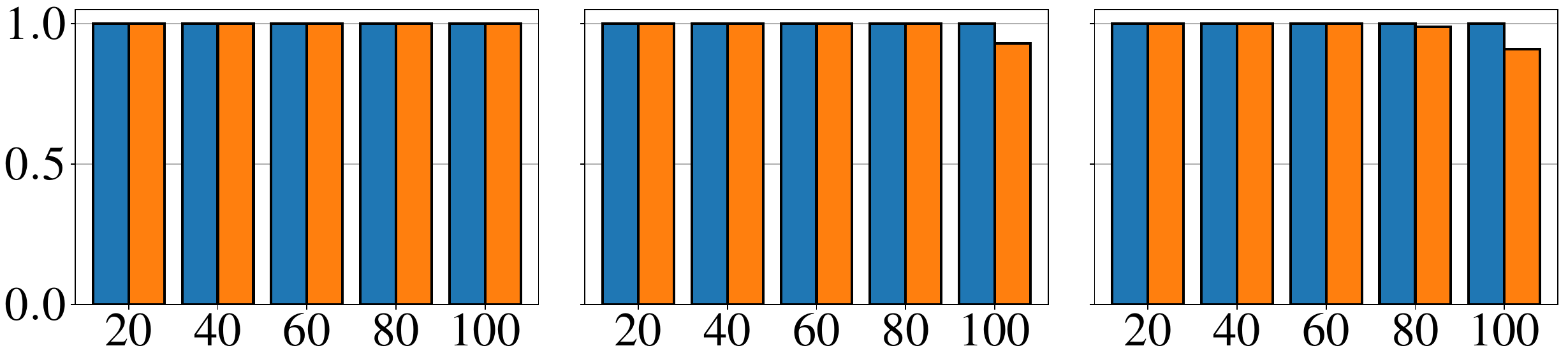}}
    \end{overpic}
        \vspace{16mm}
\caption{Performance of \DFSDP and \PQS over \urbm. The top row shows the average
    computation time and the bottom row shows the success rate, for density levels
    $\rho=0.4, 0.5, 0.6$, from left to right.}
    \label{fig:UnlabeledAlgorithms}
\end{figure}

\begin{figure}[h!]
    \centering
    \vspace{-1mm}
\includegraphics[width=0.27\textwidth]{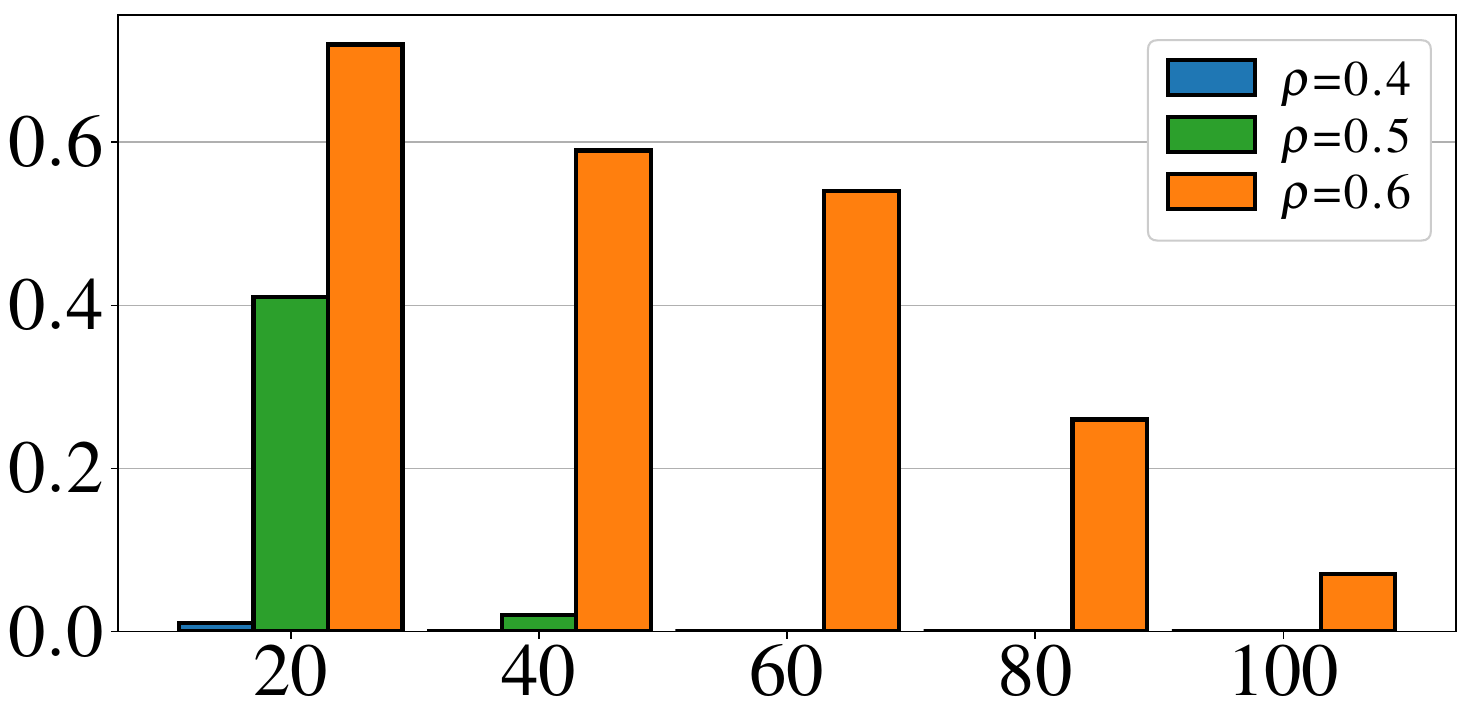}
    \caption{Average \mrb size for \urbm instances with $\rho=0.4-0.6$ and $n=20-100$.
    For $\rho = 0.4$ and $0.5$, the $\mrb$ sizes are near zero as the number of 
    objects goes beyond $20$.}
    \label{fig:UnlabeledResults}
    \vspace{-1mm}
\end{figure}

\subsection{Manually Constructed Difficult Cases}
In the random scenario, the running buffer size is limited. In particular, for 
\lrbm, the dependency graph tends to consist of multiple strongly connected 
components that can be dealt with independently. We further show the performance 
of \DFSDP on the instances with $\mrb=\Theta(\sqrt{n})$. We evaluate three kinds 
of instances: (1) \textsc{UG}: $m^2$-object \urbm instances whose $\udg$ are 
dependency grid $\mathcal D(m,2m)$ (e.g., Fig.~\ref{fig:DependencyGrid}); (2) 
\textsc{LG}: $m^2$-object \lrbm instances whose start and goal arrangements are 
the same as the instances in (1). (3) \textsc{LC}: $m^2$-object \lrbm instances 
with objects placed on a cycle (Fig.~\ref{fig:lrbm-cycle}). The computation time and 
the corresponding \mrb are shown in Fig.~\ref{fig:SpecialResults}. For 
\textsc{LG} instances, the labels are randomly assigned. We try 30 test cases and 
then plot out the average. We observe that the \mrb are much larger for these 
handcrafted instances as compared with random instances with similar density and
number of objects.

\begin{figure}[h!]
    \vspace{1mm}
    \centering
    \begin{overpic}[width=0.42\textwidth]{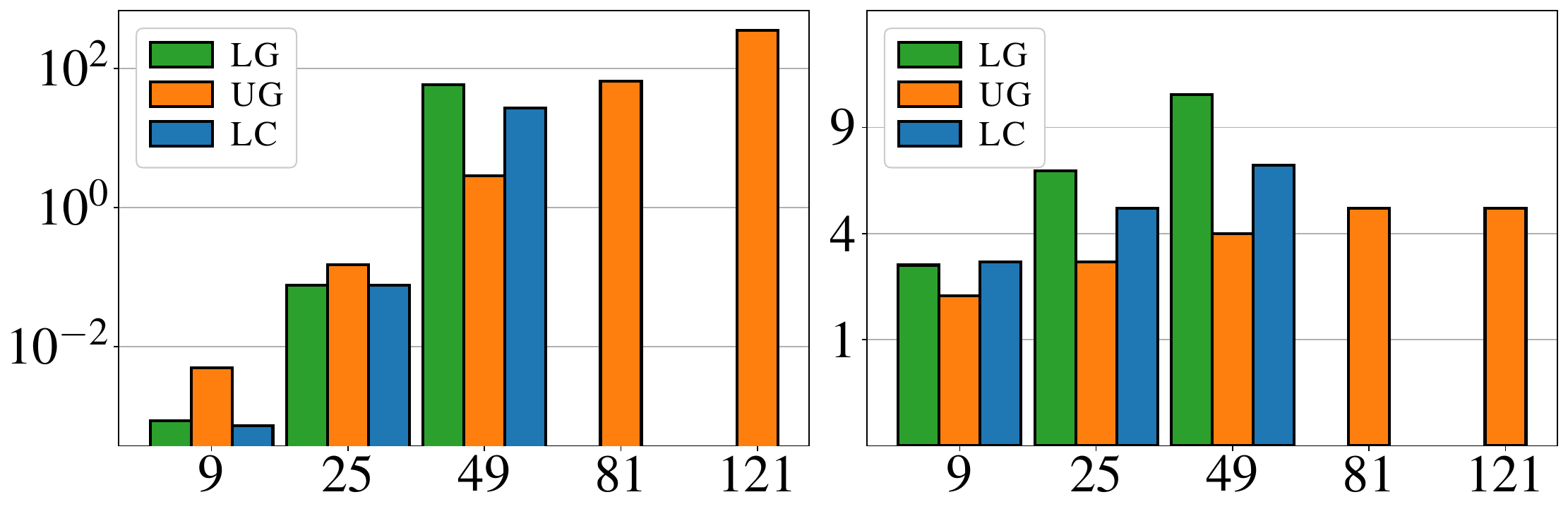}
    \end{overpic}
    \caption{ For handcrafted cases and different number of objects, the left 
    figure shows the computation time by \DFSDP and the right figure the resulting
    \mrb size.}
    \label{fig:SpecialResults}
    \vspace{-3mm}
\end{figure}

\section{Conclusion and Discussion}\label{sec:conclusion}
In this work, we investigate the problem of minimizing the number of running 
buffers (\mrb) for solving labeled and unlabeled tabletop rearrangement problems 
with overhand grasps (\toro), which translates to finding a best linear ordering 
of vertices of the associated underlying dependency graph. 
For \toro, \mrb is an important quantity to understand as it determines the 
problem's feasibility if only external buffers are to be used, which is 
the case in some real-world applications \cite{han2018complexity}.
Despite the provably high computational complexity that is involved, we provide 
effective dynamic programming-based algorithms capable of quickly computing \mrb 
for large and dense labeled/unlabeled \toro instances.
In addition, we also provide methods for minimizing the total number of buffers 
subject to \mrb constraints.
Whereas we prove that \mrb can grow unbounded for both labeled and unlabeled 
settings for special cases for uniform cylinders, empirical evaluations 
suggest that real-world random \toro instances are likely to have much smaller
\mrb values. 

We conclude by leaving the readers with some interesting open problems. On the 
structural side, while \lrbm in general is proven to be NP-Hard, the computational 
intractability of either \lrbm with uniform cylinders or \urbm in general remains
unresolved. 
As for bounds, the lower and upper bounds of \mrb for \lrbm for uniform cylinders
do not yet agree; can the bound gap be narrowed further?


\bibliographystyle{IEEEtran}
\bibliography{bib}


\end{document}